\def\1{\mathbf{1}} 
\def\0{\mathbf{0}}
\theoremstyle{plain}
\newtheorem{thm}{Theorem}[section]
\newtheorem{theorem}[thm]{Theorem}
\newtheorem{corollary}[thm]{Corollary}
\newtheorem{lemma}[thm]{Lemma}
\newtheorem{definition}[thm]{Definition}
\newtheorem{problem}[thm]{Problem}
\newtheorem{question}[thm]{Question}
\newtheorem{claim}[thm]{Claim}
\newtheorem*{claim*}{Claim}
\newtheorem{remark}[thm]{Remark}
\renewcommand{\abs}[1]{|#1|}      
\newcommand{\absB}[1]{\left|#1\right|}
\newcommand{\set}[1]{\{#1\}}
\DeclareMathOperator{\bp}{bp}
\newcommand{\N}{\mathbb{N}}
\newcommand{\HH}{\mathbb{H}}
\def\barHH{\overline\HH}
\newcommand{\cX}{\mathcal X}
\DeclareMathOperator{\VC}{VC}       
\DeclareMathOperator{\LD}{LD}
\def\SOA{\textrm{SOA}}
\newcommand{\defeq}{\coloneqq}
\DeclareMathOperator{\Cer}{C}
\DeclareMathOperator{\UC}{UC}
\DeclareMathOperator{\Cov}{Cov}
\DeclareMathOperator{\UCov}{UCov}
\newcommand{\agnorm}[2][]{
	\ifthenelse{\equal{#2}{}}{
		\widetilde{\gamma}_2^{#1}
	}{
		\widetilde{\gamma}_2^{#1}(#2)
	}
}
\DeclareFontFamily{U}{mathx}{}
\DeclareFontShape{U}{mathx}{m}{n}{<-> mathx10}{}
\DeclareSymbolFont{mathx}{U}{mathx}{m}{n}
\DeclareMathAccent{\widecheck}{0}{mathx}{"71}
\def\calF{\mathcal{F}}
\title{Online Learning and Disambiguations of Partial Concept Classes}
\author{
	Tsun-Ming Cheung\thanks{
		School of Computer Science, McGill University. Email:
		\texttt{tsun.ming.cheung@mail.mcgill.ca}.} 
	\and 
	Hamed Hatami \thanks{School of Computer Science, McGill University. Email: \texttt{hatami@cs.mcgill.ca}. Supported by an NSERC grant.}
	\and Pooya Hatami \thanks{Computer Science and Engineering, Ohio State University. \texttt{pooyahat@gmail.com}. Supported by NSF grant CCF-1947546} 
    \and Kaave Hosseini \thanks{Department of Computer Science, University of Rochester. Email: \texttt{kaave.hosseini@rochester.edu}}	
}
\date{}
\begin{document}

\maketitle
\begin{abstract}
In a recent article, Alon, Hanneke, Holzman, and Moran (FOCS '21) introduced a unifying framework to study the learnability of classes of  \emph{partial} concepts. One of the central questions studied in their work is whether the learnability of a partial concept class is always inherited from the learnability of some ``extension'' of it to a total concept class. 

They showed this is not the case for PAC learning but left the problem open for the stronger notion of online learnability. 

We resolve this problem by constructing a class of partial concepts that is online learnable, but no extension of it to a class of total concepts is online learnable (or even PAC learnable).    

\end{abstract} 
\section{Introduction}
In many practical learning problems, the learning task is tractable because we are only required to predict the labels of the data points that satisfy specific properties.
In the setting of binary classification problems, instead of learning a total concept $h:\cX \to \{0,1\}$,  we are often content with learning a partial version of it $\widetilde{h}:\cX \to \{0,1,\star\}$, where $\widetilde{h}(x)=\star$ means that both $0$ and $1$ are acceptable predictions. This relaxation of allowing unspecified predictions renders a wider range of learning tasks tractable.

Consider, for example, predicting whether a person approves or disapproves of various political stances by observing their previous voting pattern. This person might not hold a strong opinion about particular political sentiments, and it might be impossible to predict their vote on those issues based on their previous history. However, the learning task might become possible if we allow both  ``approve'' and ``disapprove''  as acceptable predictions in those cases where a firm conviction is lacking. 

A well-studied example of this phenomenon is learning half-spaces with a large margin. In this problem, the domain is the set of points in a bounded region in an arbitrary Euclidean space, and the concepts are half-spaces that map each point to $1$ or $0$ depending on whether they belong to the half-space or not. It is well-known that when the dimension of the underlying Euclidean space is large, one needs many samples to learn a half-space. However, in the large margin setting, we are only required to correctly predict the label of a point if its distance from the defining hyperplane is bounded from below by some margin. Standard learning algorithms for this task, such as the classical Perceptron algorithm, due to  Rosenblatt \cite{Rosenblatt1958ThePA}, show that this relaxation of the  learning requirement  makes the   problem tractable even for high-dimensional Euclidean spaces. 
Motivated by such examples, Alon, Hanneke, Holzman, and Moran~\cite{AHHM21} initiated a systematic study of the learnability of partial concept classes $\HH \subseteq \{0,1,\star\}^\cX$. They focused  on the two frameworks of \emph{probably approximately correct (PAC) learning} and \emph{online learning}.  We refer to~\cite{AHHM21} for the definition of PAC learnability of partial concept classes. We define online learnability in  \cref{def:Online}.

PAC learning is an elegant theoretical framework characterized by the combinatorial parameter of the Vapnik–Chervonenkis ($\VC$) dimension. The fundamental theorem of PAC learning states that a total binary concept class is PAC learnable if and only if its  $\VC$ dimension is finite.  Similarly, online learnability of total concept classes is characterized by a combinatorial parameter called the Littlestone dimension ($\LD$). We formally define the $\VC$ dimension and the Littlestone dimension in \cref{def:VC,def:LD} respectively.  
Alon, Hanneke, Holzman, and Moran~\cite{AHHM21} proved that these characterizations of PAC and online learnability  extend to the setting of partial concept classes. 

\begin{theorem}[{\cite[Theorems 1 and 15]{AHHM21}}]
\label{thm:LearnPartial}
Let  $\HH \subseteq \{0,1,\star\}^\cX$ be a partial concept class. 
\begin{itemize}
\item $\HH $ is PAC learnable if and only if $\VC(\HH) < \infty$. 
\item  $\HH $ is online learnable if and only if $\LD(\HH) < \infty$. 
\end{itemize}
\end{theorem}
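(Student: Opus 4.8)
The plan is to prove each biconditional as a ``soft'' necessity direction and a ``hard'' sufficiency direction. For necessity, suppose first $\VC(\HH)=\infty$: then for every $d$ there is a shattered set $\{x_1,\dots,x_d\}$, meaning every $b\in\{0,1\}^d$ is realized \emph{with no $\star$} by some $h\in\HH$; running any learner on uniformly random labels generated this way forces constant error from a sublinear sample (the usual ``no free lunch'' argument), so no sample-complexity bound can hold and $\HH$ is not PAC learnable. Similarly, if $\LD(\HH)=\infty$ there are shattered Littlestone trees of every depth $d$; an adversary that starts at the root and, at each node (querying the point labelling it), descends the edge carrying the label \emph{opposite} to the learner's current prediction forces $d$ mistakes, while the whole transcript stays realizable by the single $h\in\HH$ witnessing that root-to-leaf path. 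I would write both of these out carefully, the only subtlety being to use the partial-class notions of ``shattered'': committed labels for $\VC$, and for Littlestone trees the convention that $h$ is consistent with an edge $x\!\to\! b$ iff $h(x)\in\{b,\star\}$.

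For the first sufficiency direction, $\LD(\HH)<\infty\Rightarrow\HH$ online learnable, I would adapt Littlestone's Standard Optimal Algorithm. Maintain the version space $V_t=\{h\in\HH: h(x_s)\in\{y_s,\star\}\ \text{for all}\ s<t\}$, and on query $x_t$ set $V_t^{b}=\{h\in V_t: h(x_t)\in\{b,\star\}\}$ for $b\in\{0,1\}$, predicting the label $\hat y_t$ for which $\LD(V_t^{\hat y_t})$ is largest. The engine of the proof is the claim that whenever $\LD(V_t)\ge 1$ we have $\min\big(\LD(V_t^0),\LD(V_t^1)\big)<\LD(V_t)$: otherwise both equal $\LD(V_t)$, and gluing depth-$\LD(V_t)$ shattered trees for $V_t^0$ and $V_t^1$ below a new root labelled $x_t$ (edge $0$ to the first, edge $1$ to the second) produces a shattered tree of depth $\LD(V_t)+1$ for $V_t=V_t^0\cup V_t^1$ — here it is crucial that a concept with $h(x_t)=\star$ is consistent with either edge out of the root, so it still realizes whichever path it was already realizing. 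Consequently, every mistake replaces $V_t$ by $V_t^{1-\hat y_t}$, which has strictly smaller Littlestone dimension, so the integer potential $\LD(V_t)\le\LD(\HH)$ drops on each mistake and the total number of mistakes is at most $\LD(\HH)$. The point needing care — and the main obstacle here — is fixing the right definition of $\LD$ for partial classes so this recursion closes; the ``$h(x)\in\{b,\star\}$'' consistency convention is forced, since e.g.\ the singleton class $\{h\equiv\star\}$ is not online learnable and must therefore have infinite Littlestone dimension.

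The genuinely hard direction is $\VC(\HH)<\infty\Rightarrow\HH$ PAC learnable. The obstacle is that, unlike for total classes, \emph{uniform convergence and ERM fail} for partial classes (an empirical-risk minimiser may output a hypothesis with zero sample error and large true error), so the textbook route is closed. Instead I would use a generalized one-inclusion-graph / leave-one-out predictor: on $m+1$ i.i.d.\ examples, restrict $\HH$ to the set $S$ of their domain points and form the one-inclusion graph on the realizable patterns $\HH|_S\subseteq\{0,1,\star\}^S$, with edges joining patterns that differ in exactly one committed coordinate; prove a Haussler-type density bound showing that a partial class of $\VC$ dimension $d$ has one-inclusion graph of average degree $O(d)$, hence admits an orientation of maximum out-degree at most $d$; predicting the held-out label by following that orientation gives expected leave-one-out, and hence generalization, error $O(d/m)$ on realizable samples, and a standard confidence-boosting reduction upgrades this to a full $(\eps,\delta)$-PAC guarantee with sample complexity $\poly(d,1/\eps,\log(1/\delta))$. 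The delicate step, which I expect to be the real crux, is the density bound for the \emph{partial} one-inclusion graph: the shifting/compression arguments must be redone so that $\star$-coordinates are treated correctly and the degree is controlled by $\VC(\HH)$ rather than by $|\HH|_S|$, which can be huge.
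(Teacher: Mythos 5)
This theorem is quoted from Alon--Hanneke--Holzman--Moran (Theorems~1 and 15 of \cite{AHHM21}); the present paper does not prove it, so there is no internal proof to compare against, but your roadmap (no-free-lunch for the necessity directions, a version-space/SOA potential argument for the online direction, a one-inclusion-graph density argument for the PAC direction) does follow the strategy of~\cite{AHHM21}.

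There is, however, a genuine error in the Littlestone half, and it propagates through both the necessity and sufficiency arguments. You assert that the convention ``$h$ is consistent with an edge $x\to b$ iff $h(x)\in\{b,\star\}$'' is \emph{forced}, justifying this by the claim that $\{h\equiv\star\}$ is not online learnable. That claim is false. In the paper's Definition~\ref{def:Online}, the adversary may only play \emph{realizable} sequences, i.e.\ $(x_i,y_i)$ with some $h\in\HH$ satisfying $h(x_i)=y_i$ exactly. For $\HH=\{h\equiv\star\}$ no nonempty sequence is realizable, so every learner trivially has mistake bound $0$ and the class \emph{is} online learnable. Consistently, Definition~\ref{def:LD} defines shattering of a Littlestone tree by the committed equality $h(x_{y[<i]})=y_i$ (not $\in\{y_i,\star\}$), giving $\LD(\{h\equiv\star\})=0$.

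Adopting your weaker ``$\in\{b,\star\}$'' consistency would break the theorem in both directions. For necessity: if a depth-$d$ tree is shattered only in your weaker sense, the adversary's transcript along a root-to-leaf path need not be realizable, because the witness $h$ may take the value $\star$ (not $y_i$) on some queried points, so the adversary is not legal and no mistake lower bound follows. For sufficiency: with your convention $\LD'(\{h\equiv\star\})=\infty$ while the class is online learnable, so the biconditional would simply be false. Moreover, the potential argument does not close cleanly under your convention: a witness $h\in V_t^b$ only satisfies $h(x_t)\in\{b,\star\}$, which is not enough to glue two depth-$k$ shattered subtrees under a new root $x_t$ into a depth-$(k+1)$ tree shattered in the committed sense. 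The fix is to drop the $\star$-tolerant convention throughout: define the version space as $V_t=\{h\in\HH:h(x_s)=y_s\ \forall s<t\}$ and $V_t^b=\{h\in V_t:h(x_t)=b\}$; then $V_t^0,V_t^1$ are disjoint, the gluing lemma gives $\min(\LD(V_t^0),\LD(V_t^1))\le \LD(V_t)-1$, a realizable adversary always keeps $V_t^{y_t}\neq\emptyset$, and each mistake strictly decreases $\LD(V_t)$, giving mistake bound $\LD(\HH)$. With that correction, the rest of your outline (including the genuinely harder one-inclusion-graph argument for the PAC direction, where you correctly flag the failure of uniform convergence and ERM for partial classes) is sound in spirit and matches the approach of~\cite{AHHM21}.
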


It follows from the definitions of VC and LD dimensions that for every partial concept class $\HH \subseteq \{0,1,\star\}^\cX$, we have $\VC(\HH) \le \LD(\HH)$. In particular, online learnability always implies PAC learnability.

One of the central questions studied in~\cite{AHHM21}  is whether the learnability of a partial concept class  is always inherited from the learnability of some total concept class. To make this question precise, we need to define the notion of disambiguation of a partial concept class. While we defer the formal definitions to \cref{sec:disamb}, one may understand a \emph{strong disambiguation} of a partial class as simply an assignment of each $\star$ to either 1 or 0 for each partial concept in the class. 
When $\mathcal{X}$ is infinite, it is more natural to consider the weaker notion of \emph{disambiguation} that we shall define in \cref{def:Disamb}. When $\cX$ is finite, the notions of disambiguation and strong disambiguation coincide.

Consider the problem of learning the partial concept class  $\HH\subseteq \{0,1,\star\}^\mathcal{X}$ in PAC learning or online learning.  If the partial concept class $\HH$ has a   disambiguation $\barHH\subseteq \{0,1\}^\mathcal{X}$ that is PAC learnable, then $\HH$ is PAC learnable. This follows from $\VC(\HH) \le \VC(\barHH)$, or simply by running the PAC learning algorithm of $\barHH$ on $\HH$.  Similarly,  if a disambiguation $\barHH$ of $\HH$ is online learnable, then $\HH$ is online learnable.

Is the learnability of every partial concept class    inherited from the learnability of some   disambiguation to a total concept class? 

\begin{question}[Informal~\cite{AHHM21}]\label{question:q1}
Does every   learnable partial class have a learnable disambiguation?
\end{question}
 
Equipped with the VC dimension characterization of \cref{thm:LearnPartial}, \cite{AHHM21} proved that for PAC learning, the answer to \cref{question:q1} is \emph{negative}.

\begin{theorem}[{\cite[Theorem 11]{AHHM21}}]\label{theorem:alonVCLD}
For every $n \in \N$, there exists a partial concept class $\HH_n \subseteq\{0,1,\star\}^{[n]}$ with $\VC(\HH_n) = 1$  such that any disambiguation $\barHH$ of $\HH_n$
has $\VC(\barHH) \geq (\log n)^{1-o(1)}$. Moreover, for  $\mathcal{X} = \mathbb{N}$, there exists $\HH_\infty \subseteq \{0,1,\star\}^\mathcal{X}$ with $\VC(\HH_\infty) = 1$ such that $\VC(\barHH) = \infty$ for every disambiguation $\barHH$ of $\HH_\infty$. 
\end{theorem}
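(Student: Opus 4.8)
The plan is to build $\HH_n$ as a family of \emph{thresholds twisted by a tournament}. Fix a tournament $T$ on $[n]$ (an orientation of $K_n$) and call $S\subseteq[n]$ \emph{transitive} if $T|_S$ is a transitive tournament; such an $S$ carries a linear order $<_S$, where $u<_S v$ means the edge of $T$ points from $v$ to $u$ (so out-edges point ``downward'' and the top element beats everyone in $S$). For a size parameter $s=\Theta(\log n)$, let $\HH_n=\{\,h_{S,t}\,\}$, where $S$ ranges over transitive subsets of size exactly $s$ and $t\in\{0,1,\dots,s\}$, and $h_{S,t}\in\{0,1,\star\}^{[n]}$ is $1$ on the top $t$ elements of $(S,<_S)$, $0$ on the remaining elements of $S$, and $\star$ off $S$. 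To see $\VC(\HH_n)\le 1$, it suffices to show no pair $\{x,y\}$ is shattered: if $x\to y$ in $T$, then $y<_S x$ in every transitive $S$ containing both, so ranking the top element first we have $\mathrm{rank}_S(x)<\mathrm{rank}_S(y)$; hence $h_{S,t}(y)=1$ forces $h_{S,t}(x)=1$, and the pattern $(0,1)$ never occurs on $(x,y)$. Thus $\VC(\HH_n)\le 1$, and it equals $1$ as soon as one concept is nonconstant.

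For the size, I would use a first moment computation: for a uniformly random tournament on $[n]$, the expected number of transitive $s$-subsets is $\binom{n}{s}\,s!\,2^{-\binom{s}{2}}$, and taking $s=\epsilon\log_2 n$ for a small constant $\epsilon$ this is $n^{\Omega(\log n)}=2^{\Omega(\log^2 n)}$. Fix a tournament $T$ meeting at least this expectation; then $|\HH_n|\ge (s+1)\cdot 2^{\Omega(\log^2 n)}$, while $\VC(\HH_n)=1$ holds for \emph{any} $T$.

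The heart of the argument is showing every (strong) disambiguation $\barHH\subseteq\{0,1\}^{[n]}$ has large VC dimension; write $d=\VC(\barHH)$. For each transitive $S$, the class $\barHH$ must contain an extension of each of $h_{S,0},\dots,h_{S,s}$, and a total concept $g$ extends $h_{S,t}$ exactly when $g^{-1}(1)\cap S$ equals the top $t$ elements of $(S,<_S)$ — equivalently when $g^{-1}(1)\cap S$ lies entirely above $g^{-1}(0)\cap S$ in $T$, in which case $t$ is forced. So $\barHH|_S$ contains the full length-$(s+1)$ prefix chain of the order $<_S$, and the point is that these prefix chains, arising from pairwise \emph{incompatible} orders $<_S$ (incompatible because $T$ is far from transitive), cannot all be packed into a set system of size $(en/d)^d$. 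Quantitatively, I would argue by pigeonhole that some single $g^\star\in\barHH$ is consistent with $h_{S,\,|S\cap g^{\star-1}(1)|}$ for $\gtrsim |\HH_n|/\big((s+1)|\barHH|\big)$ distinct transitive $S$, then bound, for a \emph{fixed} partition $[n]=A\sqcup B$, the number of transitive $s$-subsets $S$ with ``$S\cap A$ entirely above $S\cap B$ in $T$'' — choosing $T$ so that this is much smaller than the total count of transitive $s$-subsets — and feed the mismatch, together with Sauer--Shelah $|\barHH|\le (en/d)^d$, into a contradiction unless $d\ge(\log n)^{1-o(1)}$.

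For the infinite case, partition $\cX=\N$ into finite blocks $X_k$ with $|X_k|=n_k\to\infty$, place a copy of $\HH_{n_k}$ on $X_k$ (with $\star$ off $X_k$), and set $\HH_\infty=\bigcup_k\HH_{n_k}$: a shattered pair must lie inside one block, so $\VC(\HH_\infty)=1$, and the restriction of any disambiguation of $\HH_\infty$ to $X_k$ is a strong disambiguation of $\HH_{n_k}$, hence of VC dimension $\ge(\log n_k)^{1-o(1)}\to\infty$, forcing $\VC(\barHH)=\infty$. The main obstacle is precisely the third step: the VC-$1$ bound and the count of transitive subtournaments are routine, but controlling \emph{every} disambiguation — i.e.\ proving no low-VC total class can absorb the twisted threshold families, and getting a clean quantitative handle on ``how many mutually misaligned prefix chains fit in a given set system'' — is delicate, and this is where the $(\log n)^{o(1)}$ loss enters; a more careful or explicit choice of $T$ (or an alternative gadget engineered so that a large subfamily of $\HH_n$ is pairwise conflicting, which would immediately give $|\barHH|\ge$ that subfamily) may be the cleanest route.
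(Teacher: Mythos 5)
Your construction (thresholds twisted by a tournament) is genuinely different from the one in~\cite{AHHM21} that this paper cites and describes. There, the starting point is a graph $G$ that strongly violates the Alon--Saks--Seymour conjecture: it has a biclique partition $\{L_i\times R_i\}_{i\in[m]}$ of small size $m$ but superpolynomial chromatic number $\chi(G)$. The domain is $[m]$, each vertex $v$ carries a partial concept $h_v$ with $h_v(i)=0$ if $v\in L_i$, $1$ if $v\in R_i$, and $\star$ otherwise; $\VC(\HH)=1$ follows because the matrix of $\HH$ avoids the patterns $\begin{bmatrix}1&1\\0&0\end{bmatrix}$ and $\begin{bmatrix}1&0\\0&1\end{bmatrix}$ (an edge cannot be covered by two bicliques), any disambiguation induces a proper coloring of $G$ so $|\barHH|\geq\chi(G)=m^{\omega(1)}$, and Sauer--Shelah then forces $\VC(\barHH)=(\log m)^{1-o(1)}$. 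Your $\VC(\HH_n)\leq 1$ argument is correct and has the same pattern-forbidding flavor.

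The gap in your proposal is exactly where you flag it, and I believe it is more than a technicality: as sketched, the counting cannot reach a superpolynomial lower bound on $|\barHH|$. For $g$ constant, $N(g)$ equals the total number of transitive $s$-subsets, so the naive division $|\HH_n|/\bigl((s+1)|\barHH|\bigr)$ only yields $|\barHH|\geq s+1$. Even if you stratify by threshold and restrict to the balanced case $t=s/2$, a first-moment calculation over a random tournament shows the transitivity discounts cancel: writing $A=g^{-1}(1)$, $B=g^{-1}(0)$, the expected number of transitive $s$-subsets on which a fixed $g$ is the $s/2$-threshold equals
$\binom{|A|}{s/2}\binom{|B|}{s/2}(s/2)!^2\, 2^{-2\binom{s/2}{2}-(s/2)^2}$, while the expected total number of transitive $s$-subsets is $\binom{n}{s}s!\,2^{-\binom{s}{2}}$, and the exponents $2\binom{s/2}{2}+(s/2)^2$ and $\binom{s}{2}$ are equal; what survives is roughly a factor $1/\binom{s}{s/2}=\Theta(2^{-s})$. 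With $s=\Theta(\log n)$ this is only $n^{-\Theta(1)}$, so the pigeonhole gives $|\barHH|\geq n^{\Theta(1)}\cdot\poly(s)$, a polynomial bound, and Sauer--Shelah then yields only $\VC(\barHH)=O(1)$, far short of $(\log n)^{1-o(1)}$. You would need a qualitatively stronger rigidity in $T$ than a random tournament provides, and the plan gives no mechanism for achieving it. The Alon--Saks--Seymour route circumvents this entirely: the conflict structure on concepts is the graph $G$ itself, and $\chi(G)$ is engineered (via the $\UC_1$ vs.\ $\Cer_0$ separation of~\cite{kasparsASS}) to be superpolynomial in the domain size. Your reduction from the infinite case to the finite case is correct and matches the paper's.
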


While \Cref{theorem:alonVCLD} gives a strong negative answer to \cref{question:q1} in the case of PAC learning,  the question was left   open   for online learning. Roughly speaking, this question strengthens the bounded-$\VC$ assumption on $\HH$ to bounded \emph{Littlestone dimension} ($\LD$), which pertains to \emph{online learnability} of $\HH$.

The authors in~\cite{AHHM21} also proposed a second open problem that replaces the bounded-$\VC$ dimension assumption by the assumption of \emph{polynomial growth}. This assumption is weaker than bounded $\LD$ but stronger than bounded $\VC$ dimension.

As we discuss below, our main result resolves these two open problems. 

\paragraph{Online learnability.}
 Online learning is performed in a sequence of consecutive rounds, where at round
$t$, the learner is presented with an instance $x_t \in \cX$ and is required to predict its label. After predicting the label, the correct label $y_t \in \{0,1\}$ is revealed to the learner. Note that even for partial concept classes, we require that the correct label is $0$ or $1$. The learner's goal is to make as few prediction mistakes as possible during this process. We assume that the true labels are always \emph{realizable}, i.e. there is a partial concept $h \in \HH$ with $h(x_i)=y_i$ for all $i=1,\ldots,t$. 

\begin{definition}[Online Learnability]
\label{def:Online}
A partial concept class $\HH \subseteq \{0,1,\star\}^\cX$ is online learnable if there is a \emph{mistake bound}  $m \defeq m(\HH) \in \mathbb{N}$ such that  for every $T \in \mathbb{N}$, there exists a learning algorithm that on every realizable sequence $(x_i,y_i)_{i=1,\ldots, T}$ makes at most $m$ mistakes.  
\end{definition}

Online learnability for total classes is   equivalent to the bounded Littlestone dimension. In \cref{thm:LearnPartial}, Alon, Hanneke, Holzman, and Moran \cite{AHHM21} showed that the same equivalence carries out in the setting of partial classes. They asked the following  formulation of \cref{question:q1}.

\begin{center}
\emph{If a partial class is online learnable, is there a disambiguation of it that is online learnable?}
\end{center}
More precisely, they pose the following question:
\begin{problem}[\cite{AHHM21}]\label{problem:online}
Let $\HH$ be a partial class with $\LD(\HH) <\infty$. Does there exist a disambiguation $\barHH$ of $\HH$  with $\LD(\barHH) < \infty$? Is there one with $\VC(\barHH) < \infty$?
\end{problem} 
We give a negative answer to \Cref{problem:online}:
\begin{theorem}[Main Theorem]\label{theorem:mainLDVC}
    For every $n \in \N$, there exists a partial concept class $\HH_n \subseteq\{0,1,\star\}^{[n]}$ with $\LD(\HH_n) \leq 2$  such that every disambiguation $\barHH$ of $\HH_n$ satisfies $\LD(\barHH)\geq \VC(\barHH) = \Omega(\log\log n).$ Consequently, for $\mathcal{X} = \mathbb{N}$, there exists $\HH_\infty \subseteq \{0,1,\star\}^\mathcal{X}$ with $\LD(\HH_\infty) \leq 2$ and $\LD(\barHH)\geq \VC(\barHH) = \infty$ for every disambiguation $\barHH$ of $\HH_\infty$. 
\end{theorem}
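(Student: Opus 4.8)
The plan is to first reduce the infinite statement to the finite one, and then to isolate the combinatorial core. Since $\VC(\cdot)\le\LD(\cdot)$ holds for every partial class (as noted after \cref{thm:LearnPartial}), in both parts it suffices to control the VC dimension of every disambiguation; the Littlestone bound is then free. For the reduction, suppose that for each $d\in\N$ we have a finite partial class $\HH^{(d)}$ on a finite domain $\cX^{(d)}$ with $\LD(\HH^{(d)})\le 2$ such that every disambiguation of $\HH^{(d)}$ has VC dimension at least $d$. Put $\cX=\bigsqcup_d\cX^{(d)}$ (identified with $\N$) and let $\HH_\infty$ be the union of the $\HH^{(d)}$, each concept extended by $\star$ outside its own block. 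A concept is non-$\star$ only on one block, so any root-to-leaf branch of a shattered mistake tree for $\HH_\infty$---being realized by a single concept---lies inside one block; since the root query lies in one block, the entire tree does, whence $\LD(\HH_\infty)\le\max_d\LD(\HH^{(d)})\le 2$. Moreover the restriction to $\cX^{(d)}$ of any disambiguation of $\HH_\infty$ contains a disambiguation of $\HH^{(d)}$ (each block-$d$ concept has finite support), so its VC dimension is at least $d$ for every $d$, i.e. $\infty$. It therefore suffices to build, for each $d$, a finite $\HH^{(d)}$ as above on a domain of size $n=2^{2^{O(d)}}$, which yields $\VC(\barHH)=\Omega(\log\log n)$; all other $n$ are handled by padding with all-$\star$ coordinates.

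Next I would recast the finite goal. Passing from a disambiguation to the set of distinct total concepts it produces, ``every disambiguation of $\HH^{(d)}$ has VC dimension $\ge d$'' is equivalent to ``no total class $\mathcal G\subseteq\{0,1\}^{\cX^{(d)}}$ with $\VC(\mathcal G)<d$ \emph{refines} $\HH^{(d)}$'', where $\mathcal G$ refines $\HH$ if every $h\in\HH$ agrees, on its support $\supp(h)=\{x:h(x)\ne\star\}$, with some member of $\mathcal G$. Two structural facts pin down what $\HH^{(d)}$ must look like. First, $\HH^{(d)}$ itself has VC dimension $\le\LD(\HH^{(d)})\le 2$, so the $\Omega(d)$-size set witnessing $\VC(\barHH)\ge d$ cannot be chosen in advance---it must genuinely depend on the disambiguation. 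Second, on any set $A$ the patterns a disambiguation is \emph{forced} to keep are exactly those already realized without any $\star$ by some $h\in\HH^{(d)}$, and there are only $O(|A|^2)$ of these (the Littlestone analogue of the Sauer--Shelah bound at dimension $2$); so no fixed set---large or small---can be \emph{forced} to be shattered, and the argument must instead show that \emph{no} filling of the $\star$'s avoids creating an $\Omega(d)$-shattered set somewhere in $\cX^{(d)}$.

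The heart of the proof is the construction of $\HH^{(d)}$, on a suitably structured domain of size $2^{2^{O(d)}}$, together with the verification of its two properties. I would take the partial concepts to be a carefully designed family of ``labeled pieces'': each concept is non-$\star$ only on a controlled subset of the domain, carrying a prescribed $\{0,1\}$-pattern there, with the pieces arranged (for concreteness, via an incidence/algebraic geometry, or as the base of a recursion in which the step from $d$ to $d+1$ essentially squares the domain) so that (i) any three points of $\cX^{(d)}$ lie together in the supports of too few concepts, and with too restricted a collection of patterns, to realize all eight branches of a depth-$3$ mistake tree---certifying $\LD(\HH^{(d)})\le 2$; while (ii) any class $\mathcal G$ with $\VC(\mathcal G)<d$ that refines $\HH^{(d)}$ is forced to realize superpolynomially many patterns on some $\Theta(d)$-element subset, contradicting Sauer--Shelah (in an incidence-geometry instantiation this last step would be driven by a point--line type incidence bound, as in the construction of small-Littlestone-dimension matrices). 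A complementary route---the sign-rank/Borsuk--Ulam viewpoint alluded to in the source---would present $\HH^{(d)}$ as a partial sign matrix whose shape of specified entries is simple enough to certify $\LD\le 2$, and then use a topological argument over the compact space of $\{\pm1\}$-completions to force every completion to have large VC dimension (equivalently, large sign rank).

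I expect the main obstacle to be precisely the tension in the second paragraph. Requiring $\LD(\HH^{(d)})\le 2$ is extremely restrictive: on any set $S$ the class may hard-code at most $O(|S|^2)$ pattern requirements, hence only $O(d^2)$ on any given $d$-set; yet it must hard-code \emph{enough} requirements, spread over \emph{enough} $d$-sets in a sufficiently ``inconsistent'' way, that no single class of VC dimension $<d$ can meet all of them at once. Engineering a family that satisfies both demands---and quantifying the trade-off, which is what forces the domain to be doubly exponential in $d$ and so yields the bound $\Omega(\log\log n)$ rather than $\Omega(\log n)$---is the crux. By comparison, the two remaining steps (checking that the explicit family admits no depth-$3$ mistake tree, and converting ``no low-VC refinement'' into the disambiguation statement) should be routine.
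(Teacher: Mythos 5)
Your framing is sound, and several of your observations are exactly right: the infinite case does reduce to the finite one by disjoint unions plus the observation that any shattered tree must be supported in a single block; $\VC(\barHH)\le\LD(\barHH)$ makes it enough to control VC; padding handles intermediate $n$; and you've correctly diagnosed the central tension --- a class with $\LD\le 2$ has $\VC\le 2$, so no $d$-set can be pre-designated as the shattered one, and the $O(n^2)$ Littlestone--Sauer bound means no set carries enough hard-coded patterns to be shattered by force. The claim that the set witnessing $\VC(\barHH)\ge d$ ``must genuinely depend on the disambiguation'' is the right intuition.

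But the proposal has a genuine gap precisely where it matters: the construction of $\HH^{(d)}$ and the proof that every disambiguation of it has large VC dimension are not given. You gesture at ``labeled pieces,'' incidence geometry, a recursion, or a sign-rank/Borsuk--Ulam argument, but none of these is carried out, and for none of them do you supply the mechanism that converts $\LD\le 2$ plus a refutation of ``some low-VC class refines $\HH^{(d)}$'' into a concrete contradiction. The paper's actual route is quite different from any of these and quite specific. It takes a graph $G$ on $n=2^{\Theta(k^4\log^3 k)}$ vertices from the effective refutation of the Alon--Saks--Seymour conjecture (\cref{theorem:ASS}), with a biclique partition $B_1,\dots,B_m$, $m=2^{\tilde O(k)}$, and $\chi(G)\ge 2^{\Omega(k^2)}$. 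The partial class has one concept $h_i$ per biclique $B_i=L_i\times R_i$: $h_i$ is $0$ on $L_i$, $1$ on $R_i$, and $\star$ elsewhere. The fact that the bicliques \emph{partition} the edges forces the matrix to avoid the $2\times 2$ pattern $\bigl[\begin{smallmatrix}1&0\\1&0\end{smallmatrix}\bigr]$, which is exactly what caps $\LD$ at $2$. For the lower bound, the crucial observation is not about shattering a fixed set but about \emph{counting columns}: in any disambiguation $\overline{\mathbb G}$, if two vertex-columns coincide then those vertices are non-adjacent, so the number of distinct columns is at least $\chi(G)$; then the Sauer--Shelah bound applied to the \emph{dual} class (over $m$ concepts) forces $\VC(\overline{\mathbb G}^\top)=\tilde\Omega(k)$, and the standard inequality $\VC(\HH^\top)\le 2^{\VC(\HH)+1}-1$ gives $\VC(\overline{\mathbb G})=\Omega(\log k)=\Omega(\log\log n)$. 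This counting-via-chromatic-number step, and the dual-VC detour that converts ``many distinct columns'' into ``large primal VC,'' is the missing idea in your sketch; neither the incidence-geometry nor the Borsuk--Ulam route you float supplies it, and without a concrete construction the ``heart of the proof'' paragraph is a to-do, not a proof.
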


\paragraph{Polynomial growth.}
A general strategy to prove a super-constant lower bound on the $\VC$ dimension of a total concept class $\HH \subseteq \{0,1\}^n$ is to show that the class is of super-polynomial size. This is the approach utilized in \Cref{theorem:alonVCLD} and \Cref{theorem:mainLDVC}. For a total concept class $\HH \subseteq \{0,1\}^{n}$ with $\VC$ dimension $d$, one has $2^d \leq|\HH|\leq  O(n^d)$: the lower bound is immediate from the definition of $\VC$ dimension, and the upper bound is the consequence of the celebrated Sauer-Shelah-Perles (SSP) lemma.

\begin{theorem}[Sauer-Shelah-Perles lemma~\cite{sauer1972density}]\label{lemma:SSPlemma}
    Let $\HH\subseteq\{0,1\}^n$ and $\VC(\HH) = d$. Then \[|\HH|\leq \binom{n}{\leq d}\defeq  \sum^d_{i=0} \binom{n}{i}=O(n^d).\]
\end{theorem}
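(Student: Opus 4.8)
The plan is to prove $|\HH|\le\binom{n}{\le d}$ by induction on the number of coordinates $n$, peeling off one coordinate at a time and tracking how the VC dimension drops. Recall that $\VC(\HH)=d$ means $\HH$ shatters some set of $d$ coordinates but no set of $d+1$, where $S\subseteq[n]$ is \emph{shattered} if every $\sigma\in\{0,1\}^S$ equals the restriction $h|_S$ for some $h\in\HH$. The base cases are immediate: if $n=0$ or $d=0$ then $|\HH|\le 1=\binom{n}{\le 0}$ (when $d=0$, all members of $\HH$ agree on every coordinate), and if $d\ge n$ the claimed bound is $2^n$ and there is nothing to prove.

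For the inductive step assume $1\le d<n$. Write each $h\in\HH$ as $(h',h(n))$ with $h'\in\{0,1\}^{[n-1]}$, and define two families over $[n-1]$: the projection $\HH_0=\{h': (h',0)\in\HH \text{ or } (h',1)\in\HH\}$, and $\HH_1=\{h': (h',0)\in\HH \text{ and } (h',1)\in\HH\}$, the set of prefixes occurring with both final labels. Counting elements of $\HH$ according to whether their prefix lies in $\HH_1$ gives exactly $|\HH|=|\HH_0|+|\HH_1|$. I then claim $\VC(\HH_0)\le d$ and $\VC(\HH_1)\le d-1$. The first is clear, since a set $S\subseteq[n-1]$ shattered by $\HH_0$ is witnessed in $\HH$ as well. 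For the second, if $S\subseteq[n-1]$ is shattered by $\HH_1$ then $S\cup\{n\}$ is shattered by $\HH$: given a pattern on $S\cup\{n\}$, restrict it to $S$, take the witness $h'\in\HH_1$, and use that both $(h',0)$ and $(h',1)$ are in $\HH$ to match the prescribed value on coordinate $n$. So $\HH$ shatters a set of size $|S|+1$, forcing $|S|\le d-1$.

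Applying the induction hypothesis over $n-1$ coordinates gives $|\HH_0|\le\binom{n-1}{\le d}$ and $|\HH_1|\le\binom{n-1}{\le d-1}$, and then Pascal's identity $\binom{n-1}{i}+\binom{n-1}{i-1}=\binom{n}{i}$ (with $\binom{n-1}{0}=\binom{n}{0}$ handling the $i=0$ term) yields $\binom{n-1}{\le d}+\binom{n-1}{\le d-1}=\binom{n}{\le d}$, closing the induction. The single load-bearing step is the inequality $\VC(\HH_1)\le d-1$, the place where the combinatorics of shattering is actually used: it converts a shattered set for $\HH_1$ into a strictly larger one for $\HH$. An alternative proof avoiding induction is the down-shifting (compression) argument: for each coordinate $i$ repeatedly replace a member $h$ with $h(i)=1$ by the vector obtained from $h$ by setting coordinate $i$ to $0$, whenever that vector is not already present; one checks each shift preserves $|\HH|$ and does not increase $\VC(\HH)$ (this non-increase being the analogue of the load-bearing step above), and iterating to a fixed point produces a downward-closed family of the same cardinality all of whose members have support of size at most $d$, which is counted directly by $\binom{n}{\le d}$.
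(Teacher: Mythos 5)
The paper states the Sauer--Shelah--Perles lemma as a cited black-box result (attributed to Sauer~\cite{sauer1972density}) and does not include a proof, so there is no internal argument to compare against. Your proof is correct and is the classical doubling/Pascal induction: project along the last coordinate to form $\HH_0$ (all prefixes) and $\HH_1$ (prefixes occurring with both labels), observe $|\HH|=|\HH_0|+|\HH_1|$, show $\VC(\HH_0)\leq d$ and, crucially, $\VC(\HH_1)\leq d-1$ by augmenting a shattered set for $\HH_1$ with the deleted coordinate, and close with $\binom{n-1}{\leq d}+\binom{n-1}{\leq d-1}=\binom{n}{\leq d}$. The base cases ($n=0$, $d=0$, and $d\geq n$) are handled correctly, the load-bearing step is exactly right, and the alternative down-shifting (compression) argument you sketch at the end is also a standard and valid route to the same bound.
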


The direct analog of the SSP lemma is not true for partial concept classes: \cite{AHHM21} proved that there exists $\HH\subseteq\{0,1,\star\}^{[n]}$  with $\VC(\HH)=1$ such that every disambiguation $\barHH$  has size  $|\barHH|\geq n^{\Omega(\log n)}$.
This result, combined with the SSP lemma for total classes, immediately implies \Cref{theorem:alonVCLD}. 
 
 Interestingly, under the stronger assumption of the bounded Littlestone dimension, the polynomial growth behavior of the original SSP lemma remains valid. 

\begin{theorem}[\cite{AHHM21}]\label{theorem:partialSSP}
Every partial concept class  $\HH\subseteq\{0,1,\star\}^{[n]}$   with $\LD(\HH) \le d$ has a disambiguation $\barHH$ with $|\barHH|\leq O(n^{d})$. 
\end{theorem}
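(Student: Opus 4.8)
The plan is to follow the inductive proof of the Sauer-Shelah-Perles lemma (\cref{lemma:SSPlemma}), performing the induction on the pair $(d,n)$ ordered lexicographically and constructing the disambiguation alongside it. Fix the coordinate $n\in[n]$ and split $\HH$ into the three subclasses $\HH_0=\{h\in\HH:h(n)=0\}$, $\HH_1=\{h\in\HH:h(n)=1\}$, and $\HH_\star=\{h\in\HH:h(n)=\star\}$, each regarded as a partial class over $[n-1]$ by forgetting the last coordinate. The key claim is that $\min\{\LD(\HH_0),\LD(\HH_1)\}\le\LD(\HH)-1\le d-1$; granting this, suppose $\LD(\HH_0)\le d-1$. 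I would then invoke the induction hypothesis at $(d-1,n-1)$ on $\HH_0$ to obtain a disambiguation $\mathcal{G}_0$ over $[n-1]$ with $|\mathcal{G}_0|\le\binom{n-1}{\le d-1}$, and the induction hypothesis at $(d,n-1)$ on $\HH_1\cup\HH_\star$ — which is a subclass of $\HH$ and hence still has Littlestone dimension at most $d$ — to obtain a disambiguation $\mathcal{G}_1$ over $[n-1]$ with $|\mathcal{G}_1|\le\binom{n-1}{\le d}$. The output $\barHH$ consists of every concept in $\mathcal{G}_0$ extended by the value $0$ at coordinate $n$, together with every concept in $\mathcal{G}_1$ extended by the value $1$ at coordinate $n$.

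Two routine verifications then finish the argument. For the size, $|\barHH|\le\binom{n-1}{\le d-1}+\binom{n-1}{\le d}=\binom{n}{\le d}=O(n^d)$, with base cases $d=0$ (where $\HH$ has no coordinate on which its concepts take both values $0$ and $1$, so a single total concept taking the unique non-$\star$ value present at each coordinate covers all of $\HH$) and $n=0$, both yielding bound $1$. For the covering property, take $h\in\HH$: if $h(n)=0$ then $h|_{[n-1]}\in\HH_0$ is covered by some extension of $\mathcal{G}_0$, which also matches $h$ at coordinate $n$; if $h(n)=1$ then $h|_{[n-1]}\in\HH_1\subseteq\HH_1\cup\HH_\star$ is covered by an extension of $\mathcal{G}_1$ matching $h$ at coordinate $n$; and if $h(n)=\star$ then $h|_{[n-1]}\in\HH_\star\subseteq\HH_1\cup\HH_\star$ is covered by an extension of $\mathcal{G}_1$, whose value at coordinate $n$ imposes no constraint since $n\notin\supp(h)$.

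The one genuinely nontrivial step is the claim $\min\{\LD(\HH_0),\LD(\HH_1)\}\le\LD(\HH)-1$, and it is also where the partial-class setting matters. I would prove the contrapositive: if $\HH_0$ and $\HH_1$ each shatter a complete binary tree of depth $k$, observe first that neither tree can query coordinate $n$, since no concept of $\HH_0$ has exact value $1$ at $n$ and no concept of $\HH_1$ has exact value $0$ there, so both trees live over $[n-1]$; now hang the $\HH_0$-tree below the $0$-edge and the $\HH_1$-tree below the $1$-edge of a fresh root labeled $n$. Every root-to-leaf path of the resulting depth-$(k+1)$ tree is realized \emph{exactly} by the same concept of $\HH_0$ or $\HH_1\subseteq\HH$ that realized the corresponding path below, because that concept already carries the correct exact value at coordinate $n$; hence $\LD(\HH)\ge k+1$. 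The reason we cannot gain anything analogous from $\HH_\star$ — and therefore the reason we are free to absorb $\HH_\star$ into $\HH_1$ rather than recursing on it separately — is precisely that a concept which is $\star$ at coordinate $n$ realizes neither edge label there and so cannot witness shattering through the new root. Recognizing that the $\star$-concepts cost nothing when merged onto the high-dimension side, so that the recursion takes the form $c_d(n)\le c_{d-1}(n-1)+c_d(n-1)$ rather than the useless $c_d(n)\le c_{d-1}(n-1)+2c_d(n-1)$, is the main thing to get right; everything else is the standard bookkeeping.
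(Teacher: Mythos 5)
Your proof is correct and takes essentially the same approach as the paper, which cites the Standard Optimal Algorithm (SOA) of AHHM21. Your recursion is the SOA disambiguation unrolled as a Sauer--Shelah-style induction, and your key claim $\min\{\LD(\HH_0),\LD(\HH_1)\}\le\LD(\HH)-1$, with the $\star$-concepts absorbed into the side whose dimension does not drop, is exactly the ``crucial property'' the paper identifies as the engine of the SOA mistake bound and the $\binom{n}{\leq d}$ count.
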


We say that a partial concept class $\HH\subseteq\{0,1,\star\}^{\cX}$ has \emph{polynomial growth with parameter $d\in \N$} if  for every finite $\mathcal{X}'\subseteq\mathcal{X}$, there is a disambiguation $\overline{\HH|_{\mathcal{X}'}}$ of $\HH|_{\mathcal{X}'}$ of size at most $O(|\mathcal{X}'|^d)$.  
Note that by \cref{theorem:partialSSP}, every partial concept class with  Littlestone dimension $d$ has polynomial growth with parameter $d$.

  Alon, Hanneke, Holzman, and Moran asked the following question:
 
\begin{problem}[\cite{AHHM21}]\label{problem:polygrowth}
    Let $\HH\subseteq\{0,1,\star\}^\mathcal{X}$ be a partial concept class with polynomial growth. Does there exist a disambiguation $\barHH$ of $\HH$ such that $\VC( \barHH ) < \infty$?
\end{problem}

Note that \cref{problem:polygrowth} cannot be resolved (in the negative) by a naive application of the SSP lemma to disambiguations of $\HH$ or its restrictions. However,   \Cref{theorem:mainLDVC} combined with \Cref{theorem:partialSSP} refutes \Cref{problem:polygrowth} as well.
\begin{theorem}\label{theorem:mainPolygrowth}
    For every $n\in\N$, there is $\HH\subseteq\{0,1,\star\}^{[n]}$  with polynomial growth with parameter $2$ such that  every disambiguation $\barHH$ of $\HH$ has $\VC( \barHH ) = \Omega(\log\log n)$. 
    
    Consequently, for $\mathcal{X} = \mathbb{N}$, there exists $\HH_\infty \subseteq \{0,1,\star\}^\mathcal{X}$ with polynomial growth with parameter $2$ such that every disambiguation $\overline{\HH_\infty}$ of $\HH_\infty$ has $\VC( \overline{\HH_\infty} ) = \infty$. 
\end{theorem}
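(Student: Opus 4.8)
The plan is to obtain \Cref{theorem:mainPolygrowth} as an immediate consequence of \Cref{theorem:mainLDVC} and \Cref{theorem:partialSSP}, reusing the construction rather than building a new one. For the finite case, let $\HH = \HH_n \subseteq \{0,1,\star\}^{[n]}$ be the partial concept class given by \Cref{theorem:mainLDVC}, so that $\LD(\HH_n) \le 2$ and every disambiguation $\barHH$ of $\HH_n$ satisfies $\VC(\barHH) = \Omega(\log\log n)$. It remains only to check that $\HH_n$ has polynomial growth with parameter $2$.

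To that end, I would first record the standard fact that Littlestone dimension is monotone under restricting the domain: for every $\mathcal{X}' \subseteq [n]$, any tree shattered by $\HH_n|_{\mathcal{X}'}$ is also shattered by $\HH_n$, so $\LD(\HH_n|_{\mathcal{X}'}) \le \LD(\HH_n) \le 2$. Applying \Cref{theorem:partialSSP} to the finite-domain partial class $\HH_n|_{\mathcal{X}'}$ then gives a disambiguation $\overline{\HH_n|_{\mathcal{X}'}}$ with $|\overline{\HH_n|_{\mathcal{X}'}}| = O(|\mathcal{X}'|^2)$. Since this holds for every finite $\mathcal{X}' \subseteq [n]$, the class $\HH_n$ has polynomial growth with parameter $2$. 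Note that the small disambiguation is produced separately for each restriction and need not be the restriction of a single global disambiguation of $\HH_n$ — which is exactly why \Cref{problem:polygrowth} is not settled by a direct application of the Sauer--Shelah--Perles lemma. Combining polynomial growth with the $\VC$ lower bound from \Cref{theorem:mainLDVC} yields the finite part of the statement.

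For the infinite part, take $\HH_\infty \subseteq \{0,1,\star\}^{\N}$ from \Cref{theorem:mainLDVC}, which has $\LD(\HH_\infty) \le 2$ while every disambiguation has infinite $\VC$ dimension. The polynomial-growth verification is verbatim the same: for each finite $\mathcal{X}' \subseteq \N$, the restriction $\HH_\infty|_{\mathcal{X}'}$ has Littlestone dimension at most $2$, so \Cref{theorem:partialSSP} supplies a disambiguation of size $O(|\mathcal{X}'|^2)$, and hence $\HH_\infty$ has polynomial growth with parameter $2$. Since all the genuine work is already packed into \Cref{theorem:mainLDVC}, there is no real obstacle here; the only point needing (minor) care is that the definition of polynomial growth quantifies over disambiguations of the finite restrictions rather than over a single disambiguation of the whole class, which forces the restriction-by-restriction use of \Cref{theorem:partialSSP} together with the monotonicity of $\LD$ under restriction.
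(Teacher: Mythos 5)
Your proof is correct and takes exactly the same route as the paper, which simply declares \Cref{theorem:mainPolygrowth} an immediate corollary of \Cref{theorem:mainLDVC} and \Cref{theorem:partialSSP}. You fill in the (small) details the paper leaves implicit — monotonicity of Littlestone dimension under restriction and the restriction-by-restriction application of \Cref{theorem:partialSSP} — which is a fine and faithful unpacking.
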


\paragraph{The Alon-Saks-Seymour Problem.}

The proof of \Cref{theorem:alonVCLD} in~\cite{AHHM21} hinges on the   breakthrough result  of  G{\"o}{\"o}s~\cite{MR3473357} and its subsequent improvements~\cite{kasparsASS} that led to almost optimal super-polynomial bounds on the ``biclique partition number versus chromatic number'' problem of Alon, Saks, and Seymour. The \emph{biclique partition number} of a graph $G$, denoted by $\bp(G)$, is the smallest number of complete bipartite graphs (bicliques) that partition the edge set of $G$. Alon, Saks, and Seymour conjectured  that the chromatic number of a graph with biclique partition number $k$ is at most $k + 1$.   Huang and Sudakov refuted the  Alon-Saks-Seymour conjecture in~\cite{huang2012counterexample} by establishing a superlinear gap between the two parameters.  Later in a   breakthrough, G\"o\"os~\cite{MR3473357} proved a superpolynomial separation.

Our main result, \cref{theorem:mainLDVC}, also builds on the aforementioned graph constructions. However, unlike previous works, our theorem demands a reasonable upper bound on the number of vertices. Since the constructions result from a complex sequence of reductions involving query complexity, communication complexity, and graph theory~\cite{Bousquet2014,MR3473357,MR3561784,kasparsASS}, it is necessary to scrutinize them to ensure that the required parameters are met. We present a reorganized and partly simplified sequence of constructions in \Cref{section:ASS} that establishes the following theorem. 

\begin{theorem}[Small-size refutation of the Alon-Saks-Seymour conjecture]\label{theorem:ASS}
There exists a graph $G$ on $2^{\Theta(k^4 \log^3 k)}$ vertices that admits a biclique partition of size $2^{O(k \log^4 k)}$ but its chromatic number is at least  $2^{\Omega(k^2)}$.
\end{theorem}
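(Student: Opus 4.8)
The plan is to revisit the chain of reductions — query complexity $\rightsquigarrow$ communication complexity $\rightsquigarrow$ graphs — behind the known refutations of the Alon--Saks--Seymour conjecture~\cite{MR3473357,kasparsASS}, and to run it while keeping explicit track of the number of vertices of the resulting graph, equivalently of the total input length of the communication problem; this quantity is irrelevant to the prior qualitative separations but is exactly what \Cref{theorem:ASS} asks us to control. There are three stages, carried out in \Cref{section:ASS}. \emph{Stage 1 (a query separation):} we isolate a partial Boolean function $F$ on $n = \widetilde\Theta(k^2)$ variables whose \emph{unambiguous} $1$- and $0$-certificate complexities satisfy $\UC^1(F),\UC^0(F) = O(k\,\polylog k)$, but whose \emph{subcube-partition complexity} is large, $\chi^{dt}(F) = \Omega(k^2)$ — every partition of the cube into monochromatic subcubes consistent with $F$ needs $2^{\Omega(k^2)}$ of them. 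This is the Hex-type unambiguous-DNF construction of Balodis, Ben-David, G\"o\"os, Jain, and Kothari~\cite{kasparsASS}; since a (roughly) quadratic gap between certificate complexity and subcube-partition complexity is already near the extremal one, there is very little slack, and the only thing to do here is to fix — and mildly simplify — the construction so that $n$ is a controlled polynomial in $k$.

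\emph{Stage 2 (lifting to communication).} Composing with a gadget, set $f := F \circ g^{\,n}$ for an index (or inner-product) gadget $g$ on $m = \widetilde\Theta(k^2)$ bits, so that $f$ is a two-party problem of total input length $N = nm = \Theta(k^4\log^3 k)$; the precise gadget is chosen to make this equality hold on the nose, which also fixes the $\polylog$ powers throughout. The deterministic lifting theorem for the partition number~\cite{MR3561784} (with later refinements) transports the query bound of Stage 1 to $\log_2 \chi^{cc}(f) = \Omega(k^2)$, where $\chi^{cc}(f)$ is the least number of monochromatic combinatorial rectangles in a partition of the communication matrix of $f$. On the other side, the unambiguous $1$-certificate scheme for $F$ lifts to an unambiguous nondeterministic protocol for $f$ of cost $O(\UC^1(F)\cdot\log m) = O(k\log^4 k)$, and likewise on the $0$-side. \emph{Stage 3 (from $f$ to a graph).} We then apply the standard reduction (essentially Yannakakis, made precise by G\"o\"os~\cite{MR3473357}, cf.\ also~\cite{Bousquet2014}): build the graph $G$ whose vertices are the inputs of $f$, so that $|V(G)| = 2^{\Theta(N)} = 2^{\Theta(k^4\log^3 k)}$, in which independent sets are precisely the monochromatic rectangles of $f$; then $\chi(G) = \chi^{cc}(f) = 2^{\Omega(k^2)}$, while a biclique partition of $G$ is assembled from the unambiguous $1$- and $0$-cover structure of $f$ produced in Stage 2, yielding $\bp(G) = 2^{O(\UC^1(f)+\UC^0(f))} = 2^{O(k\log^4 k)}$.

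The obstacle is purely quantitative. Each of the four links above — the Hex construction, the partition-number lifting theorem, the certificate-to-protocol translation, and the protocol-to-graph translation — has so far been analyzed only up to unspecified ``$\poly$'' or ``$\widetilde O$'' factors, and composing them as black boxes would leave $\log|V(G)|$ as some uncontrolled power of $k$. The substance of \Cref{section:ASS} is thus to re-derive each reduction with explicit polynomial bounds on input length and, in particular, to choose the gadget in Stage 2 so that the vertex count lands at $2^{\Theta(k^4\log^3 k)}$ while the separation between $\chi(G)$ and $\bp(G)$ does not degrade below the claimed $2^{\Omega(k^2)}$ versus $2^{O(k\log^4 k)}$.
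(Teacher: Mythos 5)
Your three-stage skeleton (query separation $\to$ gadget lifting $\to$ graph) matches the paper's, and you correctly identify that the whole point is to track input length through the chain. But several specifics are off, and one of them is a genuine gap rather than a cosmetic one.

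First, the complexity measures. The query separation the paper uses (their Theorem 3.3, from~\cite{kasparsASS}) is between $\UC_1(f)$ and $\Cer_0(f)$ — the \emph{plain} $0$-certificate complexity, not ``subcube-partition complexity,'' and only the $1$-side needs to be unambiguous. Consequently, the lifting theorem is applied to get a lower bound on $\Cov_0(f\circ g^n)$ (the $0$-\emph{cover} number), not on the full partition number $\chi^{cc}$, and only $\UCov_1$ is bounded from above. Your plan invokes $\UC^0(F)$ and $\chi^{cc}(f)$, neither of which enters, and your claimed $\bp(G) = 2^{O(\UC^1(f)+\UC^0(f))}$ is not the bound that comes out (the paper gets $\bp(G)\le \UCov_1(h)^2$).

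Second — and this is the real gap — your Stage 3 is asserted, not argued, and the assertion hides exactly the point where a new idea is needed. The paper's Lemma 3.6 builds $G$ on $V = h^{-1}(0)$ (only the $0$-inputs), declares $(x,y)\sim(x',y')$ iff $h(x,y')=1$ or $h(x',y)=1$, and then observes that each $1$-monochromatic rectangle $A_i\times B_i$ induces a biclique $Q_i$. The catch is that the $Q_i$'s cover each edge \emph{once or twice}, not exactly once — so they are a biclique cover, not a partition. The paper resolves this with a non-trivial case analysis: form $H_2$, the subgraph of doubly-covered edges, which has a clean biclique partition into the $m^2$ bicliques $Q_{ij}=(S_i^-\cap S_j^+)\times(S_i^+\cap S_j^-)$; if $\chi(H_2)\ge\sqrt{c}$, take $H_2$; otherwise, a color class $V_i$ of $H_2$ induces a subgraph of $G$ with $\chi(G[V_i])\ge\sqrt{c}$ on which the original $Q_i$'s \emph{do} partition the edges. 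This step is what makes the reduction work and costs the square root on the chromatic number (hence the bound $\chi(G)\ge\sqrt{c}$ rather than $c$). Your plan, by citing Yannakakis and Göös and saying the biclique partition is ``assembled from the unambiguous cover structure,'' neither supplies this argument nor explains how to avoid the prior chain's blow-up: as the paper's remark explains, the original Yannakakis-to-Bousquet route has a vertex for each (clique, stable set) pair of an intermediate graph, which gives \emph{no} effective bound on $|V(G)|$ — that is precisely the step the paper bypasses by introducing Lemma 3.6. So the part you gesture at as ``standard'' is the part where a new, size-preserving lemma had to be proved.
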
 
\cref{theorem:ASS} is essentially due to~\cite{kasparsASS}.  Our contribution to this theorem is obtaining an explicit and optimized bound on the size of $G$.

\paragraph{Standard Optimal Algorithm.} 
\cref{theorem:mainLDVC} provides an example partial class with Littlestone dimension $\leq 2$, such that the VC dimension of every disambiguation is $\Omega(\log\log n)$.  Whether one can improve the $\Omega(\log\log n)$ lower bound is unclear. In particular, it is an interesting question whether every disambiguation of a partial class of Littlestone dimension at most $2$ has VC dimension $O(\log\log n)$.  One natural candidate approach for obtaining such an upper bound would be to utilize the Standard Optimal Algorithm (SOA).  

 SOA is an online learning algorithm devised by Littlestone~\cite{Littlestone1988}  that can learn classes with bounded Littlestone dimensions.   Alon, Hanneke, Holzman, and Moran, in their proof of \Cref{theorem:partialSSP}, showed that applying SOA to a partial concept class $\HH$ with Littlestone dimension $d$ yields a disambiguation of size  $|\barHH|\leq O(n^{d})$ and consequently $\VC$ dimension $O(d\log n)$. This shows that the lower bound of \cref{theorem:mainLDVC} on VC dimension of disambiguations cannot be improved beyond $O(\log n)$. It is hence natural to ask whether it is possible to obtain an improved upper bound on the VC dimension of the SOA-based disambiguation.

We answer this question in the negative by constructing a family of partial concept classes $\HH$ of Littlestone dimension $d$ where the  disambiguation obtained by the SOA algorithm has $\VC$ dimension $\Omega(d\log (n/d))$.
\begin{restatable}{theorem}{LDdisamb}\label{theorem:LD-disamb}

    For every natural numbers $d\leq n$, there exists a partial concept class $\HH_{n,d} \subseteq\{0,1,\star\}^{[n]}$ with $d\leq \LD(\HH_{n,d})\leq d+1$ such that the SOA disambiguation of  $\HH_{n,d}$ has $
    \VC$ dimension $\Omega(d\log (n/d))$. 
\end{restatable}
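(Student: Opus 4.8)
The target bound is optimal up to the constant in the $\Omega(\cdot)$. Running SOA in the fixed order $1,2,\dots,n$, the total concept it assigns to a given $h\in\HH_{n,d}$ is determined by the at most $\LD(\HH_{n,d})\le d+1$ coordinates at which $h$ forces SOA to err (cf.\ the proof of \cref{theorem:partialSSP}), so the SOA disambiguation has at most $\binom{n}{\le d+1}$ members and VC dimension $O(d\log(n/d))$. The construction must therefore force SOA to squander essentially this whole budget, and we do so by a product construction. It suffices to build a \emph{gadget}: a partial concept class $\mathcal{G}\subseteq\{0,1,\star\}^{[m]}$ with $m=\Theta(n/d)$ such that $\LD(\mathcal{G})\le 1$ while the SOA disambiguation of $\mathcal{G}$ shatters a set of $\Theta(\log m)=\Theta(\log(n/d))$ coordinates. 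Given this, take $\HH_{n,d}$ to be the disjoint-union product of $d$ copies of $\mathcal{G}$ placed on disjoint blocks of $[n]$. Littlestone dimension is additive under this product, so $\LD(\HH_{n,d})=d$; moreover, when SOA processes the blocks in their natural order it behaves on each block exactly as it does on $\mathcal{G}$, since the not-yet-processed blocks only add a fixed constant to every Littlestone-dimension comparison and so do not change which label SOA prefers. Hence the SOA disambiguation of $\HH_{n,d}$ is the product of the $d$ SOA disambiguations of $\mathcal{G}$; VC dimension is additive under products of classes, so $\VC\big(\overline{\HH_{n,d}}\big)=d\cdot\Theta(\log(n/d))=\Omega(d\log(n/d))$, and $\LD(\HH_{n,d})\in\{d,d+1\}$ as required (the ``$+1$'' is slack one may spend on letting a single copy of $\mathcal{G}$ have $\LD\le 2$).

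The heart of the argument is the gadget $\mathcal{G}$. Its domain $[m]$ carries $\ell=\Theta(\log m)$ designated \emph{probe} coordinates $s_1<\dots<s_\ell$ --- these will be the set shattered by $\overline{\mathcal{G}}$ --- separated by blocks of \emph{slot} coordinates whose sizes grow geometrically, so that the slot blocks together occupy $\Theta(m)$ coordinates while still only allowing $\ell=\Theta(\log m)$ probes; this is the accounting ``$(\text{size of the $r$-th slot block})\ge 2^{\ell-r}$'' that fixes $\ell$. For every target pattern $\tau\in\{0,1\}^{\{s_1,\dots,s_\ell\}}$ we place in $\mathcal{G}$ a concept $h_\tau$ together with a designated slot coordinate $i_\tau$, arranged so that running SOA in the fixed order to produce $\overline{h_\tau}$ proceeds as follows: it follows its default predictions up to $i_\tau$, is forced to a single mistake there, and the surviving version space --- now collapsed to Littlestone dimension $0$ --- has a deterministic continuation that prints exactly $\tau$ on the probes $s_1,\dots,s_\ell$. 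Which pattern is printed is selected by which slot $i_\tau$ is used, and the geometric slot-block sizes provide exactly enough slots to realize all $2^\ell$ patterns. To make SOA behave this way, the slot coordinates also carry ``padding'' concepts whose sole role is to pin down the label that maximizes $\LD(V^{x\to b})$ at each step of the run, both in the default phase and in the post-mistake version space.

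The main obstacle is to carry this out while keeping $\LD(\mathcal{G})\le 1$: the padding needed to steer SOA's greedy comparison must not, together with the concepts $h_\tau$, create a Littlestone tree of depth $2$. I expect this to be the delicate point --- one must choose the padding tightly enough that after any single ``genuine'' branching inside $\mathcal{G}$ no further branching is realizable --- and it is precisely what confines the gadget (hence each block of $\HH_{n,d}$) to one unit of Littlestone dimension. Granting the gadget, the lower bound $\VC\big(\overline{\mathcal{G}}\big)\ge\ell=\Omega(\log m)$ is exactly the shattering of $\{s_1,\dots,s_\ell\}$ described above, and $\LD(\mathcal{G})\ge 1$ (hence $\LD(\HH_{n,d})\ge d$) holds because $\mathcal{G}$ is not a single total concept; assembling the product then yields the theorem.
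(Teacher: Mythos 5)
Your outer argument is a reduction to a missing lemma, and the missing lemma is the whole theorem. You reduce to a gadget $\mathcal{G}\subseteq\{0,1,\star\}^{[m]}$ with $\LD(\mathcal{G})\le 1$ whose SOA disambiguation shatters $\Theta(\log m)$ coordinates, but you never construct it: the "probe/slot/padding" description is a list of desiderata, and you explicitly flag the constraint $\LD(\mathcal{G})\le 1$ as "the delicate point" that you grant rather than prove. This constraint is also genuinely in tension with the other requirements. Consider the natural candidate: slots $[2^\ell]$, probes $\{2^\ell+1,\ldots,2^\ell+\ell\}$, and concepts $g_i$ singling out slot $i$ while encoding the bits of $i$ on the probes. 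Then for any probe coordinate $y$, both $\mathcal{G}|_{y\to 0}$ and $\mathcal{G}|_{y\to 1}$ contain exponentially many $g_i$'s that still disagree on slot coordinates, so both halves have $\LD\ge 1$ and hence $\LD(\mathcal{G})\ge 2$. The paper's own construction specialized to $d=1$ has $\LD$ exactly $2$ for the same reason. You anticipate a "$+1$ slack", but spending it on a \emph{single} block lets only one block carry an $\LD\le2$ gadget; the other $d-1$ blocks would still need the $\LD\le 1$ gadget, whose existence is precisely what is in doubt, so the slack does not rescue the bound $\Omega(d\log(n/d))$.

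The paper does not go through a product at all, which sidesteps all of this. It builds a single class $\HH_{r,d}$ on $[d(2^r+r)]$: the first $d2^r$ coordinates carry indicators of $d$-element subsets $F_i\subseteq[d2^r]$ (this is where the Littlestone dimension $d$ comes from, collectively rather than $d\cdot 1$), and a tail of $dr$ coordinates is such that each concept $h_{i,j}$ specifies only the single tail coordinate $d2^r+j$, set to the $j$-th bit of $i$. After the base pattern $F_i$ is read off, the surviving version space $\{h_{i,j}:j\in[dr]\}$ has exactly one viable label at each tail coordinate, so SOA is forced to write out the $dr$-bit binary expansion of $i$, shattering the entire tail. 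The $\LD\le d+1$ bound comes for free from $\abs{h^{-1}(1)}\le d+1$. If you want to salvage a product route you would also need to prove, not just assert, that SOA factorizes over disjoint-domain products and that $\LD$ and $\VC$ are additive for partial classes; these are believable but are additional claims your sketch leans on without argument.
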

 
\section{Preliminaries and Background}
For a positive integer $k$, we denote $[k]\defeq \set{1,\ldots, k}$. We adopt the convention that $\set{0,1}^0$ or $\set{0,1,\star}^0$ contains the empty string only, which we denote by $()$.

We adopt the standard computer science asymptotic notations, such as Big-O, and use the asymptotic tilde notations to hide poly-logarithmic factors.

\subsection{VC Dimension and Littlestone Dimension} \label{sec:partial}
Let $\HH\subseteq \{0,1,\star\}^\mathcal{X}$ be a partial concept class. When the domain $\cX$ is finite, we sometimes view $\HH$ as a partial matrix $\mathbf{M}_{\mathcal{X}\times \HH}$, where each row corresponds to a point $x\in \mathcal{X}$ and each column corresponds to a concept $h\in \HH$, and the entries are defined as $\mathbf{M}(x,h) = h(x)$.

Next, we define the $\VC$ dimension and the Littlestone dimension of partial classes, which generalize  the definitions of these notions for total classes. As shown in \cite{AHHM21}, the  $\VC$ and Littlestone dimensions for partial classes capture PAC and online learnability, respectively.

\begin{definition}[Shattered set]
    A finite \emph{set} of points $C=\set{x_1,\ldots,x_n} \subseteq \mathcal{X}$ is \emph{shattered} by a partial concept class $\HH\subseteq \{0,1,\star\}^\mathcal{X}$ if for every  pattern $y\in \set{0,1}^n$, there exists $h\in \HH$ with $h(x_i)=y_i$ for all $i\in [n]$.
\end{definition}

\begin{definition}[$\VC$ dimension] \label{def:VC}
    The $\VC$ dimension of a partial class $\HH$, denoted by $\VC(\HH)$, is the maximum $d$ such that there exists a size-$d$ subset of $\cX$ that is  shattered by $\HH$.
    If no such largest $d$ exists, define $\VC(\HH)=\infty$.
\end{definition}
Viewed as a matrix, the $\VC$ dimension of $\HH$ is the maximum $d$ such that the associated partial matrix $\mathbf{M}_{\cX\times \HH}$ contains  a zero/one submatrix of dimensions $d\times 2^d$, where the columns enumerate all $d$-bit zero/one patterns.

The Littlestone dimension is defined through the shattering of decision trees instead of sets. Consider a full binary decision tree of height $d$ where every non-leaf $v$ is labelled with an element $x_v \in \cX$.  We identify every node of this tree by the  string  $v \in \bigcup^{d}_{k=0}\set{0,1}^k$ that corresponds to the path from the root to the node. That is, the root is  the empty string, its children are the two elements in $\{0,1\}$, and more generally, the children of a node $\vec{v} \in \{0,1\}^k$ are the two strings $\vec{v}0$ and $\vec{v}1$ in $\{0,1\}^{k+1}$.  

We say that such a tree is \emph{shattered}  by a partial concept class $\HH$ if for every leaf $y\in \{0,1\}^d$, there exists $h\in \HH$ such that $h(x_{y[<i]})=y_i$ for each $i\in[d]$, where $y[<i]$ is the first $(i-1)$-th bits of $y$. In other words,  applying the decision tree to $h$ will result in the leaf $y$.  
    
\begin{definition}[Littlestone dimension] \label{def:LD}
The \emph{Littlestone dimension} of a partial concept class $\HH$, denoted by $\LD(\HH)$, is the maximum $d$ such that there is an $\cX$-labelled height-$d$ full binary decision tree that is  shattered by $\HH$.  If no such largest $d$ exists, define $\LD(\HH)=\infty$. 
\end{definition}

The \emph{dual} of a concept class $\HH$ is the concept class with the roles of points and concepts exchanged. Concretely, the dual class of   $\HH\in \set{0,1,\star}^\cX$, denoted by $\HH^\top$, is the collection of functions $f_x:\HH \to \set{0,1,\star }$ for every $x\in \cX$, which is defined by $f_x(h)=h(x)$ for each $h\in \HH$. When $\cX$ is finite, taking the dual corresponds to transposing the matrix of the concept class. The VC-dimension of the dual-class is related to that of the primal class by the inequality 
\[\VC(\HH^\top)\leq 2^{\VC(\HH)+1}-1\]
(see \cite{Mat02}), which translates to a lower bound of the VC-dimension of the primal class.  

\subsection{Disambiguations}\label{sec:disamb}  
We start by formally defining  \emph{strong disambiguation} and \emph{disambiguation}. As mentioned earlier, the two notions coincide when the domain $\cX$ is finite.

\begin{definition}[Strong Disambiguation]
   A   \emph{strong disambiguation} of a partial concept class $\HH\subseteq \{0,1,\star\}^\mathcal{X}$ is a total concept class $\barHH\subseteq \{0,1\}^\mathcal{X}$ such that for every $h\in \HH$, there exists a   $\bar{h}\in \barHH$ that is consistent with $h$ on the points $h^{-1}(\{0,1\})$. 
\end{definition}
\begin{definition}[Disambiguation]
\label{def:Disamb}
   A \emph{disambiguation} of a partial concept class $\HH\subseteq \{0,1,\star\}^\mathcal{X}$ is a total concept class $\barHH\subseteq \{0,1\}^\mathcal{X}$ such that for every $h\in \HH$ and every finite $S \subseteq h^{-1}(\{0,1\})$, there exists  $\bar{h}\in \barHH$ that is consistent with $h$ on $S$.
\end{definition} 
 
A learning algorithm can often provide a disambiguation of a partial concept class  by assigning the prediction of the algorithm to unspecified values. Relevant to our work is the disambiguation by the Standard Optimal Algorithm of Littlestone. It was observed in \cite{AHHM21} that this algorithm can provide ``efficient'' disambiguations of partial classes with bounded Littlestone dimensions. We describe this   disambiguation next. 

Consider a partial concept class $\HH\subseteq \set{0,1,\star}^\cX$ with a countable domain $\cX$ and an ordering $x_1,x_2,\ldots$ of $\cX$. Given $\vec{b}\in\set{0,1,\star}^k$, let $\HH|_{\vec{b}}$ be the set of concepts $h$ where $h(x_i)=b_i$ for every $i\in [k]$. For convenience, we identify $\HH|_{()}=\HH$. For the purpose of the algorithm, we adopt the convention $\LD(\emptyset)=-1$. 

The SOA obtains a disambiguation iteratively and assigns a $0/1$ value to each $\star$ in $\HH$: for each $k\in\N$, consider $\HH|_{\vec{b}}$ for every $\vec{b}\in\set{0,1}^{k-1}$. Pick $c\in\set{0,1}$ which maximizes $\LD(\HH|_{\vec{b}c})$, breaking ties by favoring $c=0$, and assign $c$ to $h(x_k)=\star$ for every $h\in \HH|_{\vec{b}\star}$.

We use the notation $\barHH^{\SOA}$ for the SOA disambiguation of a partial concept class $\HH$. As mentioned earlier, for a partial class with Littlestone dimension $d$, \Cref{theorem:partialSSP} gives an upper bound of $\binom{n}{\leq d} = O(n^{d})$ on $\absB{\barHH^{\SOA}}$. The theorem follows from the mistake bound of SOA for online learning, which relies on the crucial property that at least one choice of $c\in\set{0,1}$ satisfies $\LD(\HH|_{\vec{b}c})\leq \LD(\HH|_{\vec{b}})-1$ whenever $\HH|_{\vec{b}}\neq \emptyset$. 

\section{Proofs}
In this section, we present the proofs of \Cref{theorem:mainLDVC,theorem:mainPolygrowth,,theorem:ASS,theorem:LD-disamb}. 

\subsection{Proofs of \Cref{theorem:mainLDVC,theorem:mainPolygrowth}}
 As mentioned earlier, \Cref{theorem:mainPolygrowth} is an immediate corollary of \Cref{theorem:mainLDVC} and \Cref{theorem:partialSSP}.  We focus on proving \Cref{theorem:mainLDVC}.

Suppose $G = (V,E)$ is the graph  supplied by \Cref{theorem:ASS} on $|V| = n = 2^{\Theta(k^4 \log^3 k)}$ vertices with a biclique partition of size $m = 2^{O(k \log^4 k)}$. We will use $G$ to build a partial concept class  $\mathbb{G} \subseteq \{0,1,\star\}^{V}$.   This construction is simply the dual of the partial concept class of~\cite{AHHM21} in their proof of \Cref{theorem:mainLDVC}. 

Let $\set{B_1,\ldots,B_m}$ be the size-$m$ biclique partition of the edges of $G$. We fix an orientation $B_i = L_i\times R_i$ for each biclique. Define  $\mathbb{G} \subseteq \{0,1,\star\}^{V}$ as follows. For each $i\in [m]$, associate a concept $h_i:V\to\set{0,1,\star}$ to the biclique $B_i$, defined by 
$$h_i(v) = 
\begin{cases}
    0 & \text{if }v\in L_i \\
    1 & \text{if }v \in R_i \\
    \star & \text{otherwise}
\end{cases}.$$

We first observe that the Littlestone dimension of this concept class is at most 2.
\begin{claim}\label{claim:LDofconstruction}
$\LD(\mathbb{G}) \leq 2$.
\end{claim}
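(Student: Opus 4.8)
The plan is to read off the bound directly from \cref{def:LD}: I will show that no $V$-labelled full binary decision tree of height $3$ can be shattered by $\mathbb{G}$, which is exactly $\LD(\mathbb{G})\le 2$.

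The one structural fact driving the argument is that \emph{a concept of $\mathbb{G}$ is pinned down by a single ``$0$ vs.\ $1$'' witnessing pair}. Precisely: because $\set{B_i=L_i\times R_i}_{i\in[m]}$ is a \emph{partition} of $E(G)$ (not merely a cover), for any ordered pair of distinct vertices $(u,v)$ there is at most one index $i$ with $u\in L_i$ and $v\in R_i$ — if such an $i$ exists then $uv\in E(B_i)\subseteq E(G)$, so $i$ must be the unique biclique of the partition containing the edge $uv$, and the orientation of that biclique has been fixed once and for all. Consequently, if $h\in\mathbb{G}$ and $u\ne v$ satisfy $h(u)=0$ and $h(v)=1$, then $h=h_i$ for that unique $i$; i.e. $h$ is completely determined by the ordered pair $(u,v)$. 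I would state this as the first (easy) step.

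The second step is the decision-tree argument. Suppose toward a contradiction that some height-$3$ tree is shattered by $\mathbb{G}$. Then, in particular, it has internal nodes labelled $a\defeq x_{()}$ (the root), $p\defeq x_0$ (the root's $0$-child), and $s\defeq x_{01}$, and it requires witnessing concepts $g_{010},g_{011}\in\mathbb{G}$ for the two leaves $010$ and $011$. By the shattering condition these satisfy $g_{010}(a)=0,\ g_{010}(p)=1,\ g_{010}(s)=0$ and $g_{011}(a)=0,\ g_{011}(p)=1,\ g_{011}(s)=1$. If $a=p$, the first two conditions already clash, so $a\ne p$; then by the first step both $g_{010}$ and $g_{011}$ equal the unique $h_i$ with $a\in L_i$ and $p\in R_i$, so $g_{010}=g_{011}$ — contradicting $g_{010}(s)=0\ne 1=g_{011}(s)$. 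The degenerate coincidences $s=a$ and $s=p$ are handled identically: in either case the two concepts are forced to take the same value at $s$, contradicting the requirement that it be $0$ for one and $1$ for the other. Hence no height-$3$ tree is shattered, and $\LD(\mathbb{G})\le 2$.

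There is no genuinely hard step here; the proof is short. The two things to be careful about are (i) using the partition property rather than just a covering property — it is exactly \emph{uniqueness} of the biclique through an edge that forces $g_{010}=g_{011}$, and the statement would fail for an arbitrary biclique cover — and (ii) the minor bookkeeping for label coincidences among $a,p,s$. The symmetric leaves $100,101$ would serve equally well in place of $010,011$.
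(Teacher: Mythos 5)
Your proof is correct and takes essentially the same approach as the paper: your ``a concept is pinned down by any $(0,1)$-witnessing pair'' observation is precisely the paper's statement that the matrix of $\mathbb{G}$ forbids the pattern $\begin{bmatrix} 1 & 0 \\ 1 & 0 \end{bmatrix}$, both derived from the fact that the bicliques partition (rather than merely cover) $E(G)$, and both proofs then extract a violation of this from a shattered height-$3$ tree (you use leaves $010,011$; the paper uses $100,101$ — symmetric, as you note).
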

\begin{proof}
    We show that $\mathbb{G}$, viewed as a matrix, does not contain 
    $\begin{bmatrix}
1 & 0 \\
1 & 0 
\end{bmatrix}$ as a submatrix and then show that the existence of this submatrix is necessary for having a Littlestone dimension greater than $2$.

If $\begin{bmatrix}
1 & 0 \\
1 & 0 
\end{bmatrix}$ appears in $\mathbb{G}$ as a submatrix, then there exist  $i\neq j$ and $u\neq v\in V(G)$ such that $h_i(v)=h_j(v) = 1$ and $h_i(u) = h_j(u) = 0$. However, this means that $v\in R_i\cap R_j$ and $u\in L_i\cap L_j$, which in turn implies that the edge $\{u,v\}$ is covered  by both $B_i$ and $B_j$, contradicting the assumption that each edge is covered exactly once.

On the other hand, for a class $\HH\subseteq\set{0,1,\star}^\cX$ with Littlestone dimension greater than 2, there exists a  shattered $\cX$-labelled height-3 full binary tree. In particular,  there exists $h,h'\in\HH$ and points $x_{()},x_1,x_{10}$ such that
\[\begin{array}{lll}
    h(x_{()})=1, & h(x_{1})=0, & h(x_{10})=0,\\
    h'(x_{()})=1, & h'(x_{1})=0, & h'(x_{10})=1.
\end{array}\]
This means that the submatrix restricted to the columns $\set{x_{()}, x_{1}}$ and the rows $\set{h,h'}$ is $\begin{bmatrix}
1 & 0 \\
1 & 0 
\end{bmatrix}$. We conclude that $\LD(\mathbb{G}) \leq 2$.

\end{proof}

\begin{proof}[Proof of \Cref{theorem:mainLDVC}]
    Consider the partial concept class $\mathbb{G}\subseteq\{0,1,\star\}^{V}$ above. By \Cref{claim:LDofconstruction}, we have $\LD(\mathbb{G}) \leq 2$. We show that for every disambiguation $\overline{\mathbb{G}}$ of $\mathbb{G}$, we have $\VC(\overline{\mathbb{G}})\geq \Omega(\log\log n)$. The argument here is similar to the proof of \Cref{theorem:alonVCLD}. 

    Consider a disambiguation $\overline{\mathbb{G}}$ of  $\mathbb{G}$. Note that if two columns $u$ and $v$ are identical in $\overline{\mathbb{G}}$, then there is no edge between $u$ and $v$, as otherwise,  some $h_i$ would have assigned $0$ to one of $u$ and $v$ and $1$ to the other.  Therefore, if two columns $u$ and $v$ are identical, we can color the corresponding vertices with the same color.  Consequently, the number of distinct columns in $\overline{\mathbb{G}}$ is at least the chromatic number $\chi(G)\geq 2^{{\Omega}(k^2)}$.  
    By the SSP lemma (\cref{lemma:SSPlemma}), if $\VC(\overline{\mathbb{G}}^\top)\leq d$, then $\overline{\mathbb{G}}$ must have at most $O(m^d)$ distinct columns. Therefore, 
    $$2^{{\Omega}(k^2)}\leq O(m^d).$$
    Substituting $m = 2^{\tilde{O}(k)}$ shows that $d= \tilde{\Omega}(k)$. Finally, 
    \[
    \VC(\overline{\mathbb{G}})\geq \Omega(\log \VC(\overline{\mathbb{G}}^\top)) \geq \Omega(\log k)\geq {\Omega}(\log\log n).
    \]
    This completes the proof of the first part of  \Cref{theorem:mainLDVC}. 
    
    For the second part, we adopt the same construction in the proof of {\cite[Theorem 11]{AHHM21}}.  Let $\HH_\infty$ be a union of disjoint copies of $\HH_n$ over $n\in\N$, each  supported on a domain $\cX_n$ mutually disjoint from others and the partial concepts of $\HH_n$ extend outside of its domain by $\star$. Since any disambiguation $\HH$ of $\HH_\infty$ simultaneously disambiguates all $\HH_n$,  the Sauer-Shelah-Perles lemma implies that $\VC(\HH)$ must be infinite.
\end{proof}

\subsection{Disambiguations via the SOA algorithm (\cref{theorem:LD-disamb})}
This section is dedicated to the proof of \cref{theorem:LD-disamb}. 

\begin{proof} [Proof of  \cref{theorem:LD-disamb}]
    We prove the statement by showing that for every $r,d\in\N$, there exists a partial concept class $\HH_{r,d}$ on $[n]$, where $n=d(2^r+r)$, such that $d\leq \LD(\HH_{r,d})\leq d+1$ and the SOA disambiguation has VC dimension $\geq dr$ and at least $2^{dr}$ distinct rows. The other cases of $n$ follow by trivially extending the domain.

    For any $r,d\in\N$, define 
    \[\calF_{r,d}=\set{F\subseteq [d2^r]:\, \abs{F}=d}.\]
    Note that $\abs{\calF_{r,d}}=\binom{d2^r}{d}\geq 2^{dr}$. We enumerate the sets in $\calF_{r,d}$ as $F_1,\ldots,F_{\binom{d2^r}{d}}$ in the natural order. 

    Next, we define the partial concept class $\HH_{r,d}$ on domain $[d(2^r+r)]$. The class consists of the partial concepts $h_{i,j}$ for $i\in [\binom{d2^r}{d}]$ and $j\in [dr]$ defined as follows:
    \[h_{i,j}(x)=\begin{cases}
        1 &\text{ if }x\in F_i\\
        0 &\text{ if }x\in [d2^r]\setminus F_i\\
        \beta(i,j) & \text{ if }x=d2^r+j\\
        \star &\text{ otherwise}
    \end{cases},\]
    where $\beta(i,j)$ denotes $j$-th bit of the $dr$-bit binary representation of $i$ if $i\in [2^{dr}]$, and $\beta(i,j)=\star$ otherwise.

    We first prove that $d\leq \LD(\HH_{r,d})\leq d+1$. Note that there is a set of $2^d$ indices $I\subseteq [d2^r]$ which
    \[\set{F_i\cap [d]: i\in I}=\mathcal{P}([d]),\]
    therefore $[d]$ can be shattered by $\set{h_{i,1}:\, i\in I}$ and hence $\LD(\HH_{r,d})\geq \VC(\HH_{r,d})\geq d$. On the other hand, note that $\abs{f^{-1}(1)}\leq d+1$ for any $f\in \HH_{r,d}$, which implies that $\LD(\HH_{r,d})\leq d+1$.

    Next, we consider the SOA disambiguation. We claim that $\set{d2^r+1,\ldots,d(2^r+r)}$ is shattered by $\set{h_{i,1}:i\in [2^{dr}]}$. There are no disambiguations for $x\in [d2^r]$. For $x>d2^r$, note that for any $\vec{b}\in\set{0,1}^{x-1}$, either $\HH_{r,d}|_{\vec{b}}=\emptyset$ or
    \[\HH_{r,d}|_{\vec{b}} =\set{h_{i,j}:j\in [dr]},\]
    where $i\in [d2^r]$ such that $F_i=\set{k\in[d2^r]:\,b_k=1}$. We focus on the latter case and restrict to $i\in [2^{dr}]$. There is exactly one $c\in \set{0,1}$ such that $\HH_{r,d}|_{\vec{b}c}\neq \emptyset$, namely $c=\beta(i,x-d2^r)$ and in this case $\HH_{r,d}|_{\vec{b}c}=\set{h_{i,c}}$. This forces the algorithm to disambiguate every function $f$ with $\vec{b}\in\set{0,1}^{x-1}$ by setting $f(x)=h_{i,c}(x)=\beta(i,x-d2^r)$. In this manner, every $h_{i,j}$ is eventually disambiguated into the same total function:
    \[\overline{h_{i,j}}(x) =\begin{cases}
            1 &\text{ if }x\in F_i\\
            0 &\text{ if }x\in [d2^r]\setminus F_i\\
            \beta(i,x-d2^r) & \text{ if }x>d2^r
        \end{cases}.\]
    In particular, for every $i\in [2^{dr}]$, the bit string $(\overline{h_{i,1}}(d2^r+1),\ldots,\overline{h_{i,1}}(d2^r+dr))$ is the $dr$-bit binary representation of $i$. This provides a witness for which $\VC(\overline{\HH_{r,d}}^{\SOA})\geq dr$.
\end{proof}

As an illustration, we provide the matrix representation of $\HH_{1,2}$ and some essential steps of the SOA disambiguation below in \cref{fig1}.

\def\rzero{\color{red}0}
\def\rone{\color{red}1}

\begin{figure*}[bhpt!]
\centering
\begin{subfigure}[t]{0.45\textwidth}
\centering
\begin{tikzpicture}
    \matrix[matrix of math nodes,
        left delimiter=(,right delimiter=),
        nodes={align=center,font=\footnotesize}] (m){
    1&1&0&0&0&~\\
    1&1&0&0&~&0\\
    1&0&1&0&0&~\\
    1&0&1&0&~&1\\
    1&0&0&1&1&~\\
    1&0&0&1&~&0\\
    0&1&1&0&1&~\\
    0&1&1&0&~&1\\
    0&1&0&1&&\\
    0&1&0&1&&\\
    0&0&1&1&&\\
    0&0&1&1&&\\
    };
\end{tikzpicture}
\caption{Matrix representation of $\HH_{1,2}$: all empty spaces are filled with stars}
\end{subfigure}\hfill
\begin{subfigure}[t]{0.45\textwidth}
\centering
\begin{tikzpicture}
    \matrix[matrix of math nodes,
        left delimiter=(,right delimiter=),
        nodes={align=center,font=\footnotesize}] (m){
    1&1&0&0&0&\rzero\\
    1&1&0&0&\rzero&0\\
    1&0&1&0&0&\rone\\
    1&0&1&0&\rzero&1\\
    1&0&0&1&1&\rzero\\
    1&0&0&1&\rone&0\\
    0&1&1&0&1&\rone\\
    0&1&1&0&\rone&1\\
    0&1&0&1&&\\
    0&1&0&1&&\\
    0&0&1&1&&\\
    0&0&1&1&&\\
    };
\begin{scope}[on background layer]
\foreach \ii in {1,3,5,7}
    \node[fit=(m-\ii-5)(m-\ii-6),fill=gray!50, rounded corners, inner sep=0ex] {};
\end{scope} 
\end{tikzpicture}
\caption{$\overline{\HH_{1,2}}^{\SOA}$: the shaded entries indicate where the shattering occurs}
\end{subfigure}
\caption{$\HH_{1,2}$ and its SOA disambiguation}
\label{fig1}
\end{figure*}

\subsection{Small-size refutation of the Alon-Saks-Seymour conjecture (\cref{theorem:ASS})}\label{section:ASS}
In this section, we present the construction of \cref{theorem:ASS} in detail. The starting point is constructing a Boolean function due to \cite{kasparsASS} in query complexity. This Boolean function then goes through several reductions to be converted into a graph, as described below.

We first introduce some basic definitions related to the notion of \emph{certificate complexity}.
Let $f:\{0,1\}^n \to \{0,1\}$ be a Boolean function. For $b\in \{0,1\}$ and an input $x \in f^{-1}(b)$, a partial input $\rho \in \{0,1,\star\}^n$ is called a $b$-certificate if $x$ is consistent with $\rho$ and for every $x' \in \{0,1\}^n$ consistent with $\rho$, we have $f(x')=b$. The size of $\rho$ is   the number of non-$\star$ entries of $\rho$.  Define $\Cer_b(f, x)$ as the smallest size of a $b$-certificate for $x$. The $b$-certificate complexity of $f$, denoted $\Cer_b(f)$, is the maximum of $\Cer_b(f, x)$ over all $x \in f^{-1}(b)$. 

The \emph{unambiguous} $b$-certificate complexity of $f$, denoted $\UC_b(f)$, is the smallest $k$ such that
\begin{enumerate}
\item  Every input $x \in f^{-1}(b)$ has a $b$-certificate $\rho_x$ of size at most $k$;
\item For every $x \neq y$ in $f^{-1}(b)$, we have $\rho_x \neq \rho_y$. 
\end{enumerate} 

The main result of \cite{kasparsASS} is the following separation between $\UC_1$ and $\Cer_0$. 
\begin{theorem}[{\cite[Theorem 1]{kasparsASS}}] \label{thm:counter-func} 
There is a function $f:\{0,1\}^{12 n^4 \log^2 n} \to \{0,1\}$ such that $\UC_1(f) = O(n \log^3 n)$ and $\Cer_0(f) = \Omega(n^2)$. 
\end{theorem}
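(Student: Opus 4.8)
The plan is to construct $f$ explicitly; as the statement is quoted from \cite{kasparsASS}, I sketch the shape of the construction and its analysis rather than reproducing every parameter. I would build $f$ as a \emph{pointer function} in the style of G\"o\"os \cite{MR3473357}, tuned for input size. Arrange the $\Theta(n^4\log^2 n)$ input bits into a grid of ``cells'', each cell holding one data bit together with $O(\log n)$ bits of pointer information naming other cells. Declare $f(x)=1$ exactly when $x$ is a \emph{clean witnessing configuration}: there is one distinguished ``line'' of cells all carrying the bit $1$, this line is certified by a mutually consistent system of pointers issued by the remaining cells, and every coordinate not used by this structure holds a fixed default value. The point of the ``clean'' clause is that a $1$-input is then determined by only $\widetilde{O}(n)$ bits of information---which line is the witness, and the values of the relevant pointers---so $f^{-1}(1)$ is small and each $1$-input is canonically encoded by a short string; this is what will make the $1$-certificates unambiguous.

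First I would establish the $\UC_1$ upper bound. Given a $1$-input $x$, let $\rho_x$ reveal the distinguished line together with the cells met while following the certifying pointer chain; this touches $\widetilde{O}(n)$ cells, i.e.\ $O(n\log^3 n)$ coordinates once the $O(\log n)$-bit pointer widths are accounted for. By construction every input consistent with $\rho_x$ is a clean witnessing configuration, hence a $1$-input, so $\rho_x$ is a valid $1$-certificate of the required size. For unambiguity, note that $\rho_x$ already pins down the witness line and all relevant pointer values, and the ``clean'' clause forces every remaining coordinate; hence $\rho_x$ determines $x$, so distinct $1$-inputs receive distinct certificates.

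The substantive step---and the one I expect to be the main obstacle---is the lower bound $\Cer_0(f)=\Omega(n^2)$. Here I would exhibit a single ``hard'' $0$-input $x^\ast$ that is \emph{locally indistinguishable} from a valid configuration: every cell, and indeed every small window of cells, is consistent with lying inside \emph{some} clean witnessing configuration, yet no global witness survives in $x^\ast$. One then argues by completion: any partial assignment $\rho$ consistent with $x^\ast$ that fixes fewer than $\Omega(n^2)$ coordinates can be extended to a $1$-input, because with so few cells frozen there is still an entire free line and one can route a consistent pointer system around the frozen cells, producing a clean witnessing configuration consistent with $\rho$; hence $\rho$ does not force $f=0$. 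Making the obstruction genuinely cost $\Omega(n^2)$ fixings while keeping the input length at $12 n^4\log^2 n$ and $\UC_1$ at $O(n\log^3 n)$ is precisely the optimization carried out in \cite{kasparsASS}: the grid dimensions and the pointer widths must be balanced so that a $1$-certificate costs about $N^{1/4}$ while every $0$-certificate costs about $N^{1/2}$, where $N$ is the input length. Alternatively, one could derive the separation by reducing from an appropriate query- or communication-complexity lower bound, following the chain \cite{Bousquet2014,MR3561784,MR3473357}. Collecting the parameters then gives the stated bounds.
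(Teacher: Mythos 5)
You are attempting to prove a statement the paper itself does not prove: Theorem~\ref{thm:counter-func} is quoted verbatim from \cite[Theorem~1]{kasparsASS} and used as a black box. The paper contributes nothing toward its proof beyond the citation; the authors' actual work in \cref{section:ASS} begins only \emph{after} this theorem, in the chain of reductions through \cref{thm:lift}, \cref{cor:CovVsUCov}, and \cref{lemma:ASSfromUC}. So there is no in-paper proof to compare your proposal against.

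Taken on its own terms, your sketch of a G\"o\"os-style pointer function is a plausible high-level account of the construction in \cite{kasparsASS}, and you correctly identify the target parameter regime ($\UC_1 \approx N^{1/4}$ versus $\Cer_0 \approx N^{1/2}$ in terms of the input length $N = 12n^4\log^2 n$). But it is a summary, not a proof. You defer the precise cell/pointer layout, the mechanism that makes the $1$-certificates \emph{unambiguous} (the ``clean'' default-value clause is named but not specified), and --- most importantly --- the entire $\Cer_0(f) = \Omega(n^2)$ lower bound to the reference. The ``locally indistinguishable hard $0$-input plus completion'' argument you gesture at is the real content of the theorem, and making it go through while simultaneously keeping $\UC_1$ near $N^{1/4}$ is exactly the optimization that \cite{kasparsASS} performs; asserting that it can be done is not a proof that it can. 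As exposition for why the cited result is believable, your paragraph is fine, but the paper correctly treats this theorem as a citation and so should you, unless you are prepared to reproduce the explicit construction and the adversary argument in full.
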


The next step of the construction is to transform the function separating the certificate complexities $\UC_1$ and $\Cer_0$ into a communication problem. This is achieved by the ``lifting'' trick: given a function $f:\{0,1\}^n \to \{0,1\}$ and a  ``gadget'' function $g:\{0,1\}^k\times\{0,1\}^k \to \{0,1\}$,  we define $f \circ g^n:\{0,1\}^{nk} \times \{0,1\}^{nk} \to \{0,1\}$ as 
$$f \circ g^n([x_1,\ldots,x_n],[y_1,\ldots,y_n])= f(g(x_1,y_1),\ldots,g(x_n,y_n)).$$

For a communication problem $f:\{0,1\}^m \times \{0,1\}^m \to \{0,1\}$ and $b \in \{0,1\}$, let $\Cov_b(f)$ denote the minimum number of $b$-monochromatic rectangles required to cover all the $b$-entries of $f$. We denote by $\UCov_b(f)$ the minimum number of $b$-monochromatic rectangles required to \emph{partition} all the $b$-entries of $f$. The following theorem provides a connection between the communication complexity parameters and the certificate complexity parameters.

\begin{theorem}[{\cite[Theorem~33]{MR3561784}}] \label{thm:lift}
There exists a gadget $g:\{0,1\}^k\times\{0,1\}^k \to \{0,1\}$ with $k= \Omega(\log n)$ such that for every $f:\{0,1\}^n \to \{0,1\}$, we have 
$$\log \Cov_b(f \circ g^n) = \Omega(k\Cer_b(f)).$$ 
\end{theorem}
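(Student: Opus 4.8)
The plan is to establish \cref{thm:lift} by the block-composition (``simulation'') method that underlies query-to-communication lifting theorems. I would take $g$ to be the inner-product-mod-$2$ gadget $\mathrm{IP}_k\colon\{0,1\}^k\times\{0,1\}^k\to\{0,1\}$ with $k$ a sufficiently large constant multiple of $\log n$ (so $k=\Omega(\log n)$; a polynomially large index gadget would serve equally well). Fix $b\in\{0,1\}$, set $D\defeq\Cer_b(f)$, and fix a hardest input $z^\star\in f^{-1}(b)$, i.e.\ one with $\Cer_b(f,z^\star)=D$. Writing $g^n\colon(\{0,1\}^k)^n\times(\{0,1\}^k)^n\to\{0,1\}^n$ for the blockwise evaluation, let $\cZ^\star\defeq(g^n)^{-1}(z^\star)$ be the gadget preimage of $z^\star$; then $\cZ^\star$ is the product over blocks of the fibers of $g$, so $|\cZ^\star|=2^{(2k-1)n}$ up to a $1\pm o(1)$ factor, and $f\circ g^n$ is constantly $b$ on $\cZ^\star$. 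Hence every cover of the $b$-entries of $f\circ g^n$ by $b$-monochromatic rectangles covers $\cZ^\star$, and the whole proof reduces to one claim: \emph{every $b$-monochromatic rectangle $R$ has $|R\cap\cZ^\star|\le 2^{-\Omega(kD)}\,|\cZ^\star|$.} Granting it, averaging gives $\Cov_b(f\circ g^n)\ge|\cZ^\star|/\max_R|R\cap\cZ^\star|\ge 2^{\Omega(kD)}$, so $\log\Cov_b(f\circ g^n)=\Omega(kD)=\Omega(k\,\Cer_b(f))$; replacing $f$ by its negation covers the other value of $b$.

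\textbf{From a monochromatic rectangle to a certificate.} For the claim, fix a $b$-monochromatic rectangle $R=A\times B$ that contains a point $(x^\star,y^\star)\in\cZ^\star$. I would first run the standard thickening cleanup: at a controlled multiplicative cost it passes to a sub-rectangle $R'=A'\times B'$, still containing $(x^\star,y^\star)$, together with a set $I=I(R)\subseteq[n]$ of ``frozen'' blocks, such that on $R'$ the gadget output on the blocks of $I$ is the fixed string $z^\star|_I$, while on the blocks of $[n]\setminus I$ both $A'$ and $B'$ are blockwise thick (conditioned on the other blocks, each projects onto a single value or onto at least $2^{0.9k}$ values). Now I would invoke the low discrepancy of $\mathrm{IP}_k$ — on a product of two size-$\ge 2^{0.9k}$ subsets of $\{0,1\}^k$ it is $2^{-\Omega(k)}$-close to a fair coin — and chain this across the free blocks to obtain the \emph{full-range property}: every pattern on the free blocks is realized by a point of $R'$. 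Since $R\supseteq R'$ is $b$-monochromatic, this forces $f(z')=b$ for every $z'$ agreeing with $z^\star$ on $I$, so the restriction fixing the $I$-coordinates to $z^\star|_I$ (and starring the rest) is a $b$-certificate of $f$ consistent with $z^\star$; hence $|I(R)|\ge\Cer_b(f,z^\star)=D$. For the density side, a frozen block where $A$ (resp.\ $B$) is constrained costs $A$ (resp.\ $B$) a factor $2^{-\Omega(k)}$ in density inside $(\{0,1\}^k)^n$, so $|R|=|A|\,|B|\le 2^{2kn-\Omega(k|I(R)|)}$; combined with the near-uniformity of the gadget on the $n-|I(R)|$ free blocks — which makes $R$ meet $\cZ^\star$ in roughly its expected fraction there — this gives $|R\cap\cZ^\star|\le 2^{-\Omega(k|I(R)|)}\,|\cZ^\star|\le 2^{-\Omega(kD)}\,|\cZ^\star|$, completing the key claim.

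\textbf{Expected main obstacle.} The delicate point is the thickening/full-range lemma and the bookkeeping around it: I must choose the constant in $k=\Theta(\log n)$ large enough that (i) the $\mathrm{IP}_k$ discrepancy estimate on density-$2^{-\Omega(k)}$ sets survives a union bound over all $n$ blocks, (ii) the thickening cleanup loses only a factor that is swamped by the $2^{\Omega(kD)}$ we extract (and can be carried out while preserving the chosen point of $\cZ^\star$), and (iii) the joint output distribution of the gadget on the free blocks of a thick rectangle is uniform enough to control $|R\cap\cZ^\star|$. Reconciling the density bound (which wants $R$ to be small whenever it has many frozen blocks) with the certificate extraction (which wants the frozen blocks to be few when $\Cer_b(f)$ is large) is exactly the crux — it is the technical core shared by the lifting theorems of Raz--McKenzie and G\"o\"os--Pitassi--Watson, and an alternative is to black-box a structure theorem saying that monochromatic rectangles of $f\circ g^n$ look like juntas on few $f$-coordinates and to read the certificate off the junta.
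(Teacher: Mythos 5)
The statement you are asked to prove is not actually proved in this paper: Theorem~\ref{thm:lift} is imported verbatim as a black box from G\"o\"os--Pitassi--Watson (the citation \cite[Theorem~33]{MR3561784}), and the authors use it without reproving it. So there is no ``paper's own proof'' to compare against; what you have written is a proposal for proving the cited external result.

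As such a proposal, your sketch correctly identifies the shape of a Raz--McKenzie/GPW-style lifting argument: a logarithmic-size gadget, a thickening/structure lemma, low discrepancy yielding full range on free blocks, and a certificate read off the frozen blocks. Averaging over a single hard gadget fiber $\cZ^\star=(g^n)^{-1}(z^\star)$ to lower-bound $\Cov_b$ is also a sensible way to package a cover lower bound. Two points need care, however. First, the formulation of your key claim must match what the structure lemma delivers. The standard thickening cleanup shrinks $R$ to a sub-rectangle $R'$ whose frozen blocks are fixed to \emph{some} string $\alpha$, not necessarily to $z^\star|_I$, and it does not preserve an arbitrary prescribed point $(x^\star,y^\star)$. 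Your certificate extraction (step ``$|I(R)|\ge D$'') needs the frozen pattern to be consistent with $z^\star$, i.e.\ $\alpha=z^\star|_I$; without a version of the cleanup that respects $z^\star$ (for instance by always fixing a thin block to the value dictated by $z^\star$, at the cost of tracking how much of $R\cap\cZ^\star$ survives, rather than how much of $R$ survives), the extracted certificate could be for some easy $z\ne z^\star$, and $|I(R)|$ need not be at least $D$. Second, the density bookkeeping has to be done relative to $\cZ^\star$ rather than to the ambient cube: $|R|\le 2^{2kn-\Omega(k|I|)}$ by itself does not bound $|R\cap\cZ^\star|/|\cZ^\star|$ (the ambient volume and $|\cZ^\star|$ differ by a factor $\approx 2^{n}$), so the ``near-uniformity on free blocks'' step is load-bearing and must be proved for the surviving sub-rectangle, not merely asserted. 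You flag both issues in your ``expected obstacles'' paragraph, which is fair, but as written they are genuine gaps rather than routine bookkeeping; the cited GPW proof handles them through a carefully stated projection/structure lemma tied to a particular (large but logarithmic) gadget, and getting the quantifiers right there is exactly where the work lies.
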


Note that for every $b \in \{0,1\}$, we have $\log \UCov_b(f \circ g^n) \le 2k \UC_b(f)$. This combined with \cref{thm:lift} allows one to ``lift'' the $\UC_1$ vs $\Cer_0$ separation of \cref{thm:counter-func} into a $\UCov_1$ vs $\Cov_0$ separation. 
\begin{corollary}
\label{cor:CovVsUCov}
There exists a function $f:\{0,1\}^{O(n^4 \log^3 n)} \times \{0,1\}^{O(n^4 \log^3 n)} \to \{0,1\}$ such that 
$$\log \Cov_0(f) = \Omega(n^2) \qquad \text{and} \qquad \log \UCov_1(f) =n \log^4 n.$$
\end{corollary}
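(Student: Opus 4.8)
The plan is to obtain $f$ by ``lifting'' the hard function $f_0$ of \cref{thm:counter-func} through the gadget of \cref{thm:lift}, and then to read both bounds off from those two theorems together with the elementary inequality $\log\UCov_b(h\circ g^{N})\le 2k\,\UC_b(h)$ recorded just before the statement. Concretely, let $f_0:\{0,1\}^{N}\to\{0,1\}$ with $N=12n^4\log^2 n$ be the function of \cref{thm:counter-func}, so that $\UC_1(f_0)=O(n\log^3 n)$ and $\Cer_0(f_0)=\Omega(n^2)$. Apply \cref{thm:lift} with the role of its ``$n$'' played by $N$; this supplies a gadget $g:\{0,1\}^k\times\{0,1\}^k\to\{0,1\}$, which we may take with $k=\Theta(\log N)=\Theta(\log n)$, such that $\log\Cov_b(h\circ g^{N})=\Omega(k\,\Cer_b(h))$ for every $h:\{0,1\}^N\to\{0,1\}$. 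Set $f\defeq f_0\circ g^{N}:\{0,1\}^{Nk}\times\{0,1\}^{Nk}\to\{0,1\}$; since $Nk=\Theta(n^4\log^2 n)\cdot\Theta(\log n)=O(n^4\log^3 n)$, this $f$ has the claimed input length.

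For the lower bound I would invoke \cref{thm:lift} with $b=0$ and $h=f_0$, which gives $\log\Cov_0(f)=\Omega(k\,\Cer_0(f_0))=\Omega(n^2\log n)$, in particular $\Omega(n^2)$. For the upper bound, the inequality $\log\UCov_b(h\circ g^{N})\le 2k\,\UC_b(h)$ with $b=1$ and $h=f_0$ yields $\log\UCov_1(f)\le 2k\,\UC_1(f_0)=\Theta(\log n)\cdot O(n\log^3 n)=O(n\log^4 n)$. These are exactly the two conclusions of the corollary, reading ``$\Omega(n^2)$'' and ``$n\log^4 n$'' as the asymptotic statements $\Omega(n^2)$ and $O(n\log^4 n)$.

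There is essentially no serious obstacle here: all of the mathematical content is already packaged in \cref{thm:counter-func,thm:lift}, and what remains is composing them and tracking parameters. The one point that warrants care is the choice of $k$ in the middle step: \cref{thm:lift} literally only asserts the existence of a gadget with $k=\Omega(\log N)$, so to keep the lifted domain at $O(n^4\log^3 n)$ I need the gadget construction underlying that theorem to allow $k$ of order exactly $\log N$ rather than anything larger, which is indeed the case for the standard index/inner-product-style gadget it uses. Two further easy-to-slip points are that the ``$n$'' of \cref{thm:lift} must be instantiated with $N=12n^4\log^2 n$ and not with $n$, and that it is the factor $2k$ in the $\UCov$ bound that turns the $O(n\log^3 n)$ certificate bound into the final $O(n\log^4 n)$ cover bound.
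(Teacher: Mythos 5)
Your proposal is correct and follows exactly the paper's own route: lift the $\UC_1$ vs.\ $\Cer_0$ separation of \cref{thm:counter-func} through the gadget of \cref{thm:lift}, using the elementary inequality $\log\UCov_b(h\circ g^N)\le 2k\,\UC_b(h)$ for the upper bound and \cref{thm:lift} itself for the lower bound. You have also correctly flagged the parameter bookkeeping that the paper leaves implicit (instantiating the ``$n$'' of \cref{thm:lift} by $N=\Theta(n^4\log^2 n)$, taking $k=\Theta(\log N)=\Theta(\log n)$, and reading the stated $n\log^4 n$ as $O(n\log^4 n)$).
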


Next, we show how to convert these communication parameters to graph parameters of the biclique partition number and chromatic number. 

\begin{lemma}\label{lemma:ASSfromUC}
    Let $h:\set{0,1}^t\times \set{0,1}^t\to \set{0,1}$ be a Boolean function with $\Cov_0(h)= c$ and $\UCov_1(h)=m$. There exists a graph $G=(V, E)$ on at most $2^{2t}$ vertices with $\bp(G) \le m^2$ and  $\chi(G)\geq \sqrt{c}$.
\end{lemma}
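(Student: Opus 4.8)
The plan is to build the graph $G$ directly from the communication matrix of $h$ by taking the vertex set to be Alice's and Bob's inputs glued along a standard construction that turns monochromatic rectangles into bicliques. Concretely, I would let $V$ be (a suitable subset of) $\{0,1\}^t \sqcup \{0,1\}^t$, i.e.\ one copy of the rows and one copy of the columns, so that $|V| \le 2 \cdot 2^t \le 2^{2t}$ for $t \ge 1$; a pair consisting of a row $x$ and a column $y$ is joined by an edge exactly when $h(x,y) = 1$. (If one prefers a graph that is not bipartite, one can instead identify each input $z\in\{0,1\}^t$ with a single vertex and put an edge between $x$ and $y$ when $h(x,y)=1$ \emph{or} $h(y,x)=1$; the bound $2^{2t}$ is wasteful in that case but still correct, and it is what the statement asks for, so I would follow whichever of these the rest of the paper uses. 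I'll describe the bipartite version below, which is cleanest.)

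\textbf{Biclique partition.} Each $1$-monochromatic rectangle $R = A \times B$ in the partition witnessing $\UCov_1(h) = m$ becomes the complete bipartite graph between $\{x : x\in A\}$ on the row side and $\{y : y \in B\}$ on the column side. Since the $1$-entries of $h$ are \emph{partitioned} by these $m$ rectangles, the corresponding $m$ bicliques partition the edge set of the bipartite graph $G$. This already gives $\bp(G) \le m$; the slack to $m^2$ in the statement suggests the actual construction symmetrizes or composes two copies (e.g.\ forms $G \times G$ or an edge-union that forces non-bipartiteness), in which case the biclique partition number multiplies, giving $m^2$. I would set up whichever product is needed so that the chromatic-number lower bound in the next step goes through, and account for the squaring there.

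\textbf{Chromatic number.} The key observation is the standard one linking proper colorings to rectangle \emph{covers} of the $0$-entries: in a proper coloring of $G$, each color class is an independent set, and an independent set in $G$ (in the bipartite incidence graph) is exactly a combinatorial rectangle $A \times B$ all of whose entries $h(x,y)$ equal $0$ — a $0$-monochromatic rectangle. Hence a proper coloring with $N$ colors yields a cover of all the $0$-entries of $h$ by $N$ (or, after accounting for the product/symmetrization, $N^2$) $0$-monochromatic rectangles, so $N^2 \ge \Cov_0(h) = c$, i.e.\ $\chi(G) \ge \sqrt{c}$. I would be careful here about which side of the bipartition a color class may straddle and about the exact bookkeeping when the product construction is used — this interplay between "the edge set partitions nicely" (needs a partition, hence $\UCov_1$) and "independent sets cover the non-edges" (only needs a cover, hence $\Cov_0$) is the crux of the Alon–Saks–Seymour-style argument, and getting the squaring consistent between $\bp \le m^2$ and $\chi \ge \sqrt c$ is the one place to be attentive.

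\textbf{Expected main obstacle.} The routine parts (counting vertices, reading off the biclique partition from a $1$-rectangle partition, reading off independent sets as $0$-rectangles) are mechanical. The delicate point is reconciling the two different covering notions in a \emph{single} graph: a proper coloring only gives a $0$-\emph{cover}, not a $0$-partition, while the edge decomposition genuinely needs the $1$-rectangles to be disjoint; making both work forces the construction to involve a product of two copies of the incidence structure (hence $m^2$ and $\sqrt c$), and verifying that an independent set in the product really does project to $0$-rectangles in both coordinates — and that monochromatic rectangles in the product decompose as products of rectangles — is the step I expect to require the most care.
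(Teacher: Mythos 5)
Your primary proposal---the bipartite incidence graph on $\{0,1\}^t \sqcup \{0,1\}^t$ with an edge from row $x$ to column $y$ exactly when $h(x,y)=1$---cannot work: any bipartite graph has $\chi \le 2$, so it gives no lower bound on the chromatic number regardless of $\Cov_0(h)$. You notice this tension and gesture toward a ``product of two copies'' or ``symmetrization'' to explain the $m^2$ and $\sqrt{c}$, but you never produce a concrete construction that escapes it, and the product intuition is in fact not how the paper resolves it.

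The paper's construction is genuinely different. The vertex set is $V = h^{-1}(0) \subseteq \{0,1\}^t \times \{0,1\}^t$ (hence $|V| \le 2^{2t}$, which is why the bound is $2^{2t}$ and not $2^{t+1}$), with $(x,y) \sim (x',y')$ iff $h(x,y')=1$ or $h(x',y)=1$. An independent set $\{(x_i,y_i)\}_i$ then yields a $0$-monochromatic rectangle $\{x_i\}_i \times \{y_i\}_i$, giving $\chi(G) \ge c$ directly (not merely $\sqrt{c}$). The $1$-rectangles $A_i\times B_i$ lift to bicliques $Q_i = S_i^- \times S_i^+$ with $S_i^- = \{(x,y)\in V : x\in A_i\}$, $S_i^+ = \{(x,y)\in V : y\in B_i\}$, but these \emph{cover} rather than partition the edges: each edge is hit once or twice. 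The $m^2$ and $\sqrt{c}$ then come from a case analysis, not a product: let $H_2$ be the doubly-covered edges, which do admit a genuine biclique partition by the $m^2$ bicliques $Q_{ij}=(S_i^-\cap S_j^+)\times(S_i^+\cap S_j^-)$. If $\chi(H_2)\ge\sqrt c$, take $H_2$. Otherwise a $\sqrt c$-coloring of $H_2$ has a color class $V_i$ with $\chi(G[V_i])\ge c/\sqrt c=\sqrt c$, and on $V_i$ (independent in $H_2$) the $Q_j$'s restrict to an honest biclique partition of size $\le m$. You are missing both the correct vertex set and this single-vs.-double-cover argument, which is the crux; the parts you describe as mechanical (rectangles to bicliques, independent sets to $0$-rectangles) are indeed present in the paper but sit on top of a construction you didn't find.
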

\begin{proof}
Define  the graph $G$ with $V \defeq h^{-1}(0)$   as follows. Two vertices $(x,y),(x',y')\in V$ are adjacent in $G$ iff $h(x,y')=1$ or $h(x',y)=1$. By construction, if $\{(x_1,y_1),\ldots,(x_\ell,y_\ell)\} \subseteq V$ is an independent set, then $\{x_1,\ldots,x_\ell\} \times \{y_1,\ldots,y_\ell\}$ is a $0$-monochromatic rectangle for $h$. Thus every proper vertex coloring of $G$ with $\chi(G)$ colors corresponds to a $0$-cover of $h$ with $\chi(G)$ many $0$-monochromatic rectangles. Therefore, $\chi(G) \ge c$.

We next show that there exists a small set of bicliques such that every edge of $E$ is covered at least once and at most twice by these bicliques. Let $h^{-1}(1) = \bigcup_{i=1}^m (A_i \times B_i)$ be a partition of $h^{-1}(1)$ into $m$ many 1-monochromatic rectangles. Note that every $1$-monochromatic rectangle $A_i \times B_i$ corresponds to a biclique $Q_i \defeq S^-_i\times S^+_i$ in $G$, where
\begin{equation*}
\label{eq:bc}
S^-_i\defeq \{(x,y) \in V(G)  : \ x \in A_i\} \text{ and } S^+_i =   \{(x,y) \in V(G)  : \ y \in B_i\}.
\end{equation*}
Notice that each edge $\{(x,y),(x',y')\}$ of $G$ is covered at least once by $Q_1,\ldots,Q_m$, and it is covered at most twice, the latter happening when $h(x,y')=h(x',y)=1$.

We have thus constructed a graph $G$ on at most $2^{2t}$ vertices such that $\chi(G)\geq c$, and there are at most $m$ bicliques where every edge in $G$ appears in at least one and at most two bicliques. 

Define $H_2$ as the subgraph of $G$ that consists of all the edges covered by exactly two bicliques among $Q_1,\dots, Q_m$. For every $i,j\in [m]$, define $Q_{ij}= (S_i^-\cap S_j^+)\times (S_i^+\cap S_j^-)$. Note that each $Q_{ij}$ is a biclique of $H_2$, and moreover, each edge of $H_2$ appears in exactly one $Q_{ij}$. Hence,  the biclique partition number of $H_2$ is at most $m^2$. 
Now, if $\chi(H_2)\geq \sqrt{c}$, we obtain $H_2$ as the desired graph. Suppose otherwise that $\chi(H_2)<\sqrt{c}$, and consider a proper vertex coloring of $H_2$ with $\sqrt{c}$ colors with color classes $V_1,\dots, V_{\sqrt{c}}$. Since $\chi(G)\geq c$, there must exist $i$ such that the induced subgraph of $G$ on $V_i$, denoted by $G[V_i]$,  satisfies $\chi(G[V_i])\geq \sqrt{c}$. Since $V_i$ is an independent set of $H_2$, thus the restrictions of bicliques $Q_1,\dots, Q_m$ to $V_i$ form a biclique partition of $G[V_i]$. 
\end{proof}

\cref{lemma:ASSfromUC,cor:CovVsUCov} together imply \cref{theorem:ASS}.

\begin{remark} In addition to providing effective bounds on the size of the graph, \cref{lemma:ASSfromUC} also simplifies the original chain of reductions utilized in prior work~\cite{kasparsASS, MR3473357, Bousquet2014, MR1135472} toward achieving a super-polynomial separation between the biclique partition and chromatic numbers.  We will briefly describe the original proof below and highlight the differences. 

\begin{itemize}
\item[(i)] Similar to our proof of \cref{theorem:ASS}, the chain of reduction begins with the function $f$ provided by \cref{cor:CovVsUCov}, such that  $$\log \Cov_0(f) = \Omega(n^2) \qquad \text{and} \qquad \log \UCov_1(f) =n \log^4 n.$$ 

\item[(ii)] Yannakasis~\cite{MR1135472} (see also~\cite[Figure 1]{MR3473357}) showed how to use $f$ to construct a graph $F$  on $\UCov_1(f)= 2^{O(n\log^4 n)}$ vertices such that every Clique-Stable set separator of $F$ is of size at least $\Cov_0(f)=2^{\Omega(n^2)}$. Here, a Clique-Stable set separator is a collection of cuts in $F$ such that for every  disjoint pair $(C,I)$ of a clique $C$ and a stable set $I$ in $F$, there is a cut $(A,B)$ in the collection with $C \subseteq A$ and $I \subseteq B$.

\item[(iii)] Bousquet et. al., \cite[Lemma 23]{Bousquet2014} show how to use $F$ to construct a new graph $G$ with the so-called oriented biclique packing number at most $2^{n\log^4 n}$ and chromatic number $\chi(G)\geq 2^{\Omega(n^2)}$. 

\item[(iv)] The graph $G$ is then turned into a separation between the biclique partition number and chromatic number in a different graph $H$ via a final reduction in \cite{Bousquet2014}. 
\end{itemize}

The above chain of reductions is not sufficient for our application because the graph $G$ of Step (iii) has a vertex for each pair $(C,I)$ of a clique $C$ and a stable set $I$ of $F$, and as a result, there are no effective upper-bounds on the number of vertices of $G$. Our proof of \cref{theorem:ASS} bypasses Step (ii) and employs a more direct approach to construct a small-size graph $G$ that has similar properties to the graph $G$ of Step (iii). 
\end{remark}

\section{Concluding remarks}
A few natural questions remain unanswered. The first question is whether a similar example $\HH$ for \cref{theorem:mainLDVC} with the stronger assumption $\LD(\HH)=1$ exists.
\begin{problem}
Let $\HH$ be a partial class with $\LD(\HH) =1$. Does there exist a disambiguation of $\HH$ by a total class $\barHH$ such that $\LD(\barHH) < \infty$? Is there one with $\VC(\barHH) < \infty$?
\end{problem}

\cref{theorem:mainPolygrowth} shows that for partial classes, having polynomial growth is not a sufficient condition for PAC learnability. A natural candidate reinstatement of the theorem is to work with the more restrictive assumption of linear growth.
\begin{problem}
       Let $\HH\subseteq\{0,1,\star\}^\mathcal{X}$ have polynomial growth with parameter $1$.   Does there exist a disambiguation $\barHH$ of $\HH$ with $\VC( \barHH ) < \infty$?
\end{problem}

Another question is whether one can improve the lower bound of $\Omega(\log \log n)$ in \Cref{theorem:mainLDVC} to $\Omega(\log n)$.

\begin{problem}
    Can the lower bound in \Cref{theorem:mainLDVC} be improved to $\VC(\barHH) \geq \Omega(\log n)$?
\end{problem}

\paragraph{Forbidding combinatorial patterns.}
A natural method to prove upper bounds on the $\VC$ dimension of a concept class is establishing that it does not contain a specific combinatorial pattern. For example, the construction for \Cref{theorem:alonVCLD} in \cite{AHHM21} utilized the fact that the concept class (viewed as a matrix) does not contain the combinatorial patterns 
$\begin{bmatrix}
1 & 1 \\
0 & 0 
\end{bmatrix}$
and 
$\begin{bmatrix}
1 & 0 \\
0 & 1 
\end{bmatrix}$, which are patterns that are in   any concept class $\HH$ with $\VC(\HH) \geq 2$. 
Similarly, the dual construction in \Cref{theorem:mainLDVC} forbids the pattern 
$\begin{bmatrix}
1 & 0 \\
1 & 0 
\end{bmatrix}$, a compulsory pattern for any concept class $\HH$ with $\LD(\HH)\geq 3$.  

\begin{problem}\label{problem:pattern}
    Suppose $\HH\subseteq \{0,1,\star\}^{[n]}$ does not contain the pattern $
    \begin{bmatrix}
1 & 1 \\
0 & 1 
\end{bmatrix}$. Does every disambiguation $\barHH$ of $\HH$ satisfy $\VC(\barHH)= O(1)$?
\end{problem}

\paragraph{Acknowledgement} 
We wish to thank Mika G{\"o}{\"o}s for clarifying the reductions in \cite{Bousquet2014,MR3473357,MR3561784,kasparsASS}. 

\bibliographystyle{alpha}
\bibliography{ref}

\newcommand{\etalchar}[1]{$^{#1}$}
\begin{thebibliography}{BBDG{\etalchar{+}}22}

\bibitem[AHHM22]{AHHM21}
Noga Alon, Steve Hanneke, Ron Holzman, and Shay Moran.
\newblock A theory of pac learnability of partial concept classes.
\newblock In {\em 2021 IEEE 62nd Annual Symposium on Foundations of Computer
  Science (FOCS)}, pages 658--671. IEEE, 2022.

\bibitem[BBDG{\etalchar{+}}22]{kasparsASS}
Kaspars Balodis, Shalev Ben-David, Mika G\"{o}\"{o}s, Siddhartha Jain, and
  Robin Kothari.
\newblock Unambiguous {DNF}s and {A}lon-{S}aks-{S}eymour.
\newblock In {\em 2021 {IEEE} 62nd {A}nnual {S}ymposium on {F}oundations of
  {C}omputer {S}cience---{FOCS} 2021}, pages 116--124. IEEE Computer Soc., Los
  Alamitos, CA, [2022] \copyright 2022.

\bibitem[BLT14]{Bousquet2014}
N.~Bousquet, A.~Lagoutte, and S.~Thomass\'{e}.
\newblock Clique versus independent set.
\newblock {\em European J. Combin.}, 40:73--92, 2014.

\bibitem[GLM{\etalchar{+}}16]{MR3561784}
Mika G\"{o}\"{o}s, Shachar Lovett, Raghu Meka, Thomas Watson, and David
  Zuckerman.
\newblock Rectangles are nonnegative juntas.
\newblock {\em SIAM J. Comput.}, 45(5):1835--1869, 2016.

\bibitem[G{\"o}{\"o}15]{MR3473357}
Mika G{\"o}{\"o}s.
\newblock Lower bounds for clique vs. independent set.
\newblock In {\em 2015 {IEEE} 56th {A}nnual {S}ymposium on {F}oundations of
  {C}omputer {S}cience---{FOCS} 2015}, pages 1066--1076. IEEE Computer Soc.,
  Los Alamitos, CA, 2015.

\bibitem[HS12]{huang2012counterexample}
Hao Huang and Benny Sudakov.
\newblock A counterexample to the alon-saks-seymour conjecture and related
  problems.
\newblock {\em Combinatorica}, 32(2):205--219, 2012.

\bibitem[Lit88]{Littlestone1988}
Nick Littlestone.
\newblock Learning quickly when irrelevant attributes abound: A new
  linear-threshold algorithm.
\newblock {\em Machine Learning}, 2(4):285--318, 1988.

\bibitem[Mat02]{Mat02}
Ji{\v{r}}{\'{\i}} Matou{\v{s}}ek, editor.
\newblock {\em Lectures on Discrete Geometry}.
\newblock Springer New York, 2002.

\bibitem[Ros58]{Rosenblatt1958ThePA}
Frank Rosenblatt.
\newblock The perceptron: a probabilistic model for information storage and
  organization in the brain.
\newblock {\em Psychological review}, 65 6:386--408, 1958.

\bibitem[Sau72]{sauer1972density}
Norbert Sauer.
\newblock On the density of families of sets.
\newblock {\em Journal of Combinatorial Theory, Series A}, 13(1):145--147,
  1972.

\bibitem[Yan91]{MR1135472}
Mihalis Yannakakis.
\newblock Expressing combinatorial optimization problems by linear programs.
\newblock {\em J. Comput. System Sci.}, 43(3):441--466, 1991.

\end{thebibliography}
	
\appendix

\end{document}